\title{Emergence in non-neural models: \\ grokking modular arithmetic via average gradient outer product}
\author{Neil Mallinar$^{1,2}$
  \quad 
  Daniel Beaglehole$^1$
  \quad
  Libin Zhu$^1$\\
  Adityanarayanan Radhakrishnan$^2$
  \quad 
  Parthe Pandit$^3$
  \quad 
  Mikhail Belkin$^1$\\~\\
  \small $^1$UC San Diego \quad $^2$The Broad Institute of MIT and Harvard \quad $^3$IIT Bombay\\~\\
  \small \texttt{
  \{nmallina,dbeaglehole,libinzhu,mbelkin\}@ucsd.edu}\\
  \small \texttt{aradha@mit.edu}~~;~~~
  \small \texttt{pandit@iitb.ac.in}
}
\date{}
\begin{document}

\maketitle

\begin{abstract}
Neural networks trained to solve modular arithmetic tasks exhibit \textit{grokking}, a phenomenon where the test accuracy starts improving  long after the model achieves $100\%$ training accuracy in the training process.  It is often taken as an example of ``emergence'', where model ability manifests sharply  through a phase transition. In this work, we show that the phenomenon of grokking is not specific to neural networks nor to gradient descent-based optimization. Specifically, we show that this phenomenon occurs when learning modular arithmetic with Recursive Feature Machines (RFM), an iterative algorithm that uses the Average Gradient Outer Product (AGOP) to enable task-specific feature learning with general machine learning models.  When used in conjunction with kernel machines, iterating RFM results in a fast transition from random, near zero, test accuracy to perfect test accuracy. This transition cannot be predicted 
from the training loss, which is identically zero, nor 
from the test loss, which remains constant in initial iterations.  Instead, as we show, the transition is completely determined by feature learning: RFM gradually learns block-circulant features to solve modular arithmetic.  
Paralleling the results for RFM, we show that neural networks that solve modular arithmetic also learn block-circulant features. Furthermore, we present theoretical evidence that RFM uses such block-circulant features to implement the \textit{Fourier Multiplication Algorithm}, which prior work posited as the generalizing solution neural networks learn on these tasks.  Our results demonstrate that emergence can result purely from learning task-relevant features and is not specific to neural architectures nor gradient descent-based optimization methods.  Furthermore, our work provides more evidence for AGOP as a key mechanism for feature learning in neural networks.\footnote{Code is available at \url{https://github.com/nmallinar/rfm-grokking}.}
\end{abstract}

\section{Introduction}

In recent years the  idea of ``emergence'' has become an important narrative in machine learning. While there is no broad agreement on the definition~\cite{PositionPaper},  it is often argued that ``skills''  emerge during the training process once certain data size, compute, or model size thresholds are achieved~\citep{wei2022emergent, AroraEmergent}. Furthermore, these skills are believed to appear rapidly, exhibiting sharp and seemingly unpredictable improvements in performance at these thresholds.  One of the simplest and most striking examples supporting this idea is ``grokking'' modular arithmetic~\citep{power2022grokking,nanda2023progress}. 
A neural network trained to predict modular addition or another arithmetic operation on a fixed data set rapidly transitions from near-zero    to perfect ($100\%$)   test accuracy at a certain point in the optimization process. Surprisingly, this transition point occurs long after perfect {\it training accuracy} is achieved.
Not only is this contradictory to the traditional wisdom regarding overfitting but, as we will see below, some aspects of grokking do not fit neatly with our modern understanding of ``benign overfitting''~\cite{bartlett2021deep, belkin2021fit}.

Despite a large amount of recent work on emergence and, specifically, grokking, (see, e.g., \citep{power2022grokking, liu2023omnigrokgrokkingalgorithmicdata, nanda2023progress, thilak2022slingshotmechanismempiricalstudy, furuta2024interpreting, miller2024grokkingneuralnetworksempirical}), the  nature or even existence of the emergent phenomena remains contested.
For example, the recent paper~\cite{schaeffer2023are} suggests that the  rapid emergence of skills may be a ``mirage'' due to the mismatch between the  discontinuous metrics used for evaluation, such as accuracy, and the continuous loss used in training.  The authors argue that, in contrast to accuracy, the test (or validation) loss or some other suitably chosen metric may decrease gradually throughout training and thus provide a useful measure of progress.  
Another possible progress measure is the training loss. As  SGD-type optimization algorithms generally result in a gradual decrease of the training loss, one may posit that skills appear once the training loss falls below a certain threshold in the optimization process. Indeed, such a conjecture is in the spirit of  classical generalization theory, which considers the training loss to be a useful proxy for the test performance~\cite{mohri2018foundations}. 

In this work, we show that sharp emergence in modular arithmetic  arises entirely from feature learning, independently of  other aspects of modeling and training, and is not predicted by the standard measures of progress.  We then clarify the nature of feature learning leading to the emergence of  skills in modular arithmetic. We discuss these contributions in further detail below.

\paragraph{Summary of the contributions.}
We demonstrate empirically that grokking modular arithmetic
\begin{itemize}
    \item is not specific to neural networks;
    \item is not tied to gradient-based optimization methods;
    \item is not  predicted by training or test loss\footnote{We note that for neural networks trained by SGD, emergence cannot be decoupled from training loss, as non-zero loss is required for training to occur at all.}, let alone accuracy.\footnote{Note that zero test/train  loss implies perfect test/train accuracy.}
\end{itemize}

\begin{figure}[t!]
    \centering
    \includegraphics[width=0.9\textwidth]{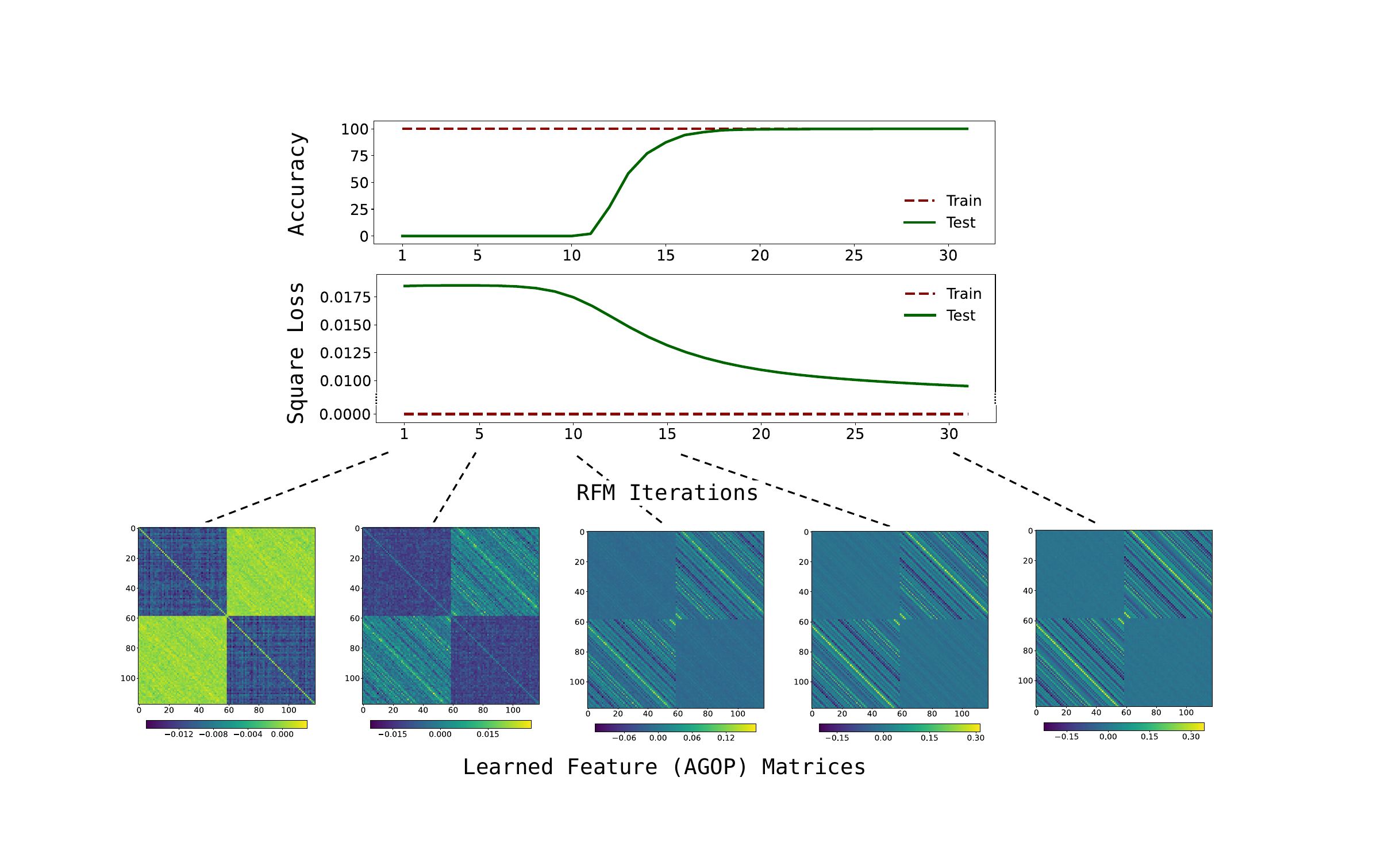}
    \caption{Recursive Feature Machines grok the modular arithmetic task ${f^*(x,y) = (x + y) \mod 59}$.}
    \label{fig:intro_fig_p59}
\end{figure}

Specifically, we show grokking for Recursive Feature Machines (RFM)~\cite{rfm_science}, an algorithm that iteratively uses the Average Gradient Outer Product (AGOP) to enable task-specific feature learning in general machine learning models.  
In this work, we use RFM to enable feature learning in kernel machines, which are a class of predictors with no native mechanism for feature learning.  In this setting, RFM iterates between three steps: (1) training a kernel machine, $f$, to fit training data; (2) computing the AGOP matrix of $f$,  $M$, over the training data to extract task-relevant features; and (3) transforming input data, $x$, using the learned features via the map $x \to M^{s/2}x$ for a matrix power $s > 0$ (see Section~\ref{sec: preliminaries} for details).

In Fig.~\ref{fig:intro_fig_p59} we give a representative example of   RFM grokking  modular addition, despite not using any gradient-based optimization methods and achieving perfect (numerically zero) training loss at every iteration. 
We see that during the first few iterations both the test loss and and test accuracy remain at the constant (random) level. However, around iteration  $10$, the test loss starts improving, and a few iterations later, test accuracy quickly transitions to $100\%$. We also observe that even early in the iteration, structure emerges in AGOP feature matrices (see Fig.~\ref{fig:intro_fig_p59}).  The gradual appearance of structure in these feature matrices is particularly striking given that the training loss is identically zero at every iteration and that the test loss does not significantly change until at least iteration eight. 
The striped patterns observed in feature matrices correspond to matrices whose sub-blocks are circulant with entries that are constant along the diagonals.\footnote{More precisely the entries of circulants are constant along the ``long'' diagonals which wrap around the matrix.  Feature matrices may also be block Hankel matrices which are constant on anti-diagonals. Unless the distinction is important, we will generally refer to all such matrices as circulant.} Such {\it circulant feature matrices} are key to learning modular arithmetic. In Section~\ref{sec: rfm emergence} we demonstrate that standard kernel machines using {\it random} {circulant features} easily learn modular operations. {As these random circulant matrices are generic, we argue that no additional structure is required to solve modular arithmetic.}

To demonstrate that the feature matrices evolve toward this structure (including for multiplication and division under an appropriate re-ordering of the input coordinates),
we introduce two  ``hidden progress measures'' (cf.~\cite{barak2022hidden}):
\begin{enumerate}
\item {\it Circulant deviation}, which measures how far the diagonals of a given matrix are from being constant. \
\item {\it AGOP alignment}, which measures similarity between the feature matrix at iteration $t$  and the AGOP of a fully trained model.
\end{enumerate}
As we demonstrate, both of these measures show gradual (initially nearly linear) progress toward a model that generalizes. 

We further argue that emergence in fully connected neural networks trained on modular arithmetic identified in prior work \citep{gromov2023grokking, mlpGrokking22} is analogous to that for RFM and is exhibited through the AGOP (see Section~\ref{sec: neural nets}). 
Indeed, by visualizing covariances of network weights, we observe that these models also learn block-circulant features to grok modular arithmetic. 
We demonstrate that these features are highly correlated with the AGOP of neural networks, corroborating prior observations from~\cite{rfm_science}.
Furthermore, paralleling our observations for RFM, our two proposed progress measures indicate gradual  progress toward a generalizing solution during neural network training.  Finally, just as for RFM, we demonstrate that training neural networks on data transformed by  random block-circulant matrices dramatically decreases training time needed to learn modular arithmetic. 

Why are these learned block circulant features effective for modular arithmetic? 
We provide supporting theoretical evidence that such circulant features result in kernel machines implementing the Fourier Multiplication Algorithm (FMA) for modular arithmetic (see Section~\ref{sec: fourier multiplication}).  For the case of neural networks, several prior works have argued empirically and theoretically that neural networks learn to implement the FMA to solve modular arithmetic~\citep{nanda2023progress, varma2023explaininggrokkingcircuitefficiency, morwani2024featureemergencemarginmaximization}.  Thus, while kernel-RFM and neural networks utilize different classes of predictive models, our results suggest that these models discover similar algorithms for implementing modular arithmetic. In particular, these results imply that the two methods have the same out-of-domain generalization properties when the entries of input vectors contain real numbers instead of $0$'s or $1$'s in the training data. 

Overall, by decoupling feature learning from predictor training, our results provide evidence for the emergent properties of machine learning models arising purely as a consequence of their ability to learn features.  We hope that our work will help isolate the underlying mechanisms of emergence and shed light on the key practical concern of how, when, and why these seemingly unpredictable transitions occur.

\paragraph{Paper outline.} Section~\ref{sec: preliminaries} contains the necessary concepts and definitions. In Section~\ref{sec: rfm emergence}, we demonstrate emergence with RFM and show AGOP features consist of circulant blocks. In Section~\ref{sec: neural nets}, we show that the neural network features are circulant and are captured by the AGOP. In Section~\ref{sec: fourier multiplication}, we prove that kernel machines learn the FMA with circulant features. 
We provide a broad discussion and conclude in Section~\ref{sec: discussion}.

\section{Preliminaries}
\label{sec: preliminaries}
\paragraph{Learning modular arithmetic.} Let $\Zp = \Z/p\Z$ denote the field of integers modulo a prime $p$ and let $\Zp^* = \Zp \backslash \{0\}$.  We learn modular functions $f^*(a, b) = g(a,b) \mod p$  where $f^* : \Zp \times \Zp \rightarrow \Zp$, $a, b\in \Zp$, and $g : \Z \times \Z \rightarrow \Z$ is an arithmetic operation on $a$ and $b$, e.g. $g(a, b) = a + b$. Note that there  are  $p^2$ discrete input pairs ($a, b$)  for all modular operations except for $f^*(a,b) = (a \div b) \mod p$, which has $p(p-1)$ inputs as the denominator cannot be $0$. 

To train models on modular arithmetic tasks, we construct input-label pairs by one-hot encoding the input and label integers. Specifically, for every pair $a, b \in \Zp$, we write the input as $\e_a \oplus \e_b \in \Real^{2p}$ and the output as $\e_{f^*(a,b)} \in \R^p$, where $\e_i \in \Real^p$ is the $i$-th standard basis vector in $p$ dimensions and $\oplus$ denotes concatenation. For example, for addition modulo $p=3$,  the equality $1+2=0 \mod 3 $ is encoded as an input/label pair of vectors in $\R^6$ and $\R^3$, respectively:
$$
\text{Input: } (\underbrace{0 ~ 1 ~ 0}_{a=1}~ \underbrace{ 0 ~ 0 ~ 1}_{b=2}) ~ ; ~ \text{Label:}\underbrace{(1 ~ 0 ~0)}_{a+b\mod3=0} .
$$
The training dataset consists of a random subset of $n = r \times N$ input/label pairs, where $r$ is termed the \emph{training fraction} and $N = p^2$ or $p(p-1)$ is the number of possible discrete inputs.

\paragraph{Complex inner product and Discrete Fourier Transform (DFT).}  In our theoretical analysis in Section~\ref{sec: fourier multiplication}, we will utilize the following notions of complex inner product and DFT.  The complex inner product $\langle \cdot, \cdot \rangle_{\mathbb{C}}$ is a map from $\mathbb{C}^{d} \times \mathbb{C}^{d} \to \mathbb{C}$ of the form 
\begin{align}
    \label{eq: Complex inner product}
    \langle u, v \rangle_{\mathbb{C}} = u\tran \bar{v}~,
\end{align}
where $\bar{v}_j$ is the complex conjugate of $v_j$.  Let $i = \sqrt{-1}$ and let $\omega = \exp(\frac{-2\pi i}{d})$.  The DFT is the map $\mathcal{F}: \mathbb{C}^{d} \to \mathbb{C}^d$ of the form $\mathcal{F}(u) = F u,$ where $F \in \mathbb{C}^{d \times d}$ is a unitary matrix with $F_{ij} = \frac{1}{\sqrt{d}}\omega^{ij}$. In matrix form, $F$ is given as 
\begin{align}
\label{eq: DFT}
F = \frac{1}{\sqrt{d}}
\begin{pmatrix}
1 & 1 & 1 & \cdots & 1 \\
1 & \omega & \omega^2 & \cdots & \omega^{d-1} \\
1 & \omega^2 & \omega^4 & \cdots & \omega^{2(d-1)} \\
\vdots & \vdots & \vdots & \ddots & \vdots \\
1 & \omega^{d-1} & \omega^{2(d-1)} & \cdots & \omega^{(d-1)(d-1)}
\end{pmatrix}
~.
\end{align}

\paragraph{Circulant matrices.} As we will show, the features that RFMs and neural networks learn in order to solve modular arithmetic contain blocks of \emph{circulant matrices}, which are defined as follows.  Let $\sigma : \Real^p \rightarrow \Real^p$ be the cyclic permutation  which acts on a vector $u\in\R^p$ by shifting its coordinates by one cell to the right:
\begin{align}
    [\sigma(u)]_j = u_{j-1 \mod p}~,
\end{align}    
for $j \in [p]$. We write the $\ell$-fold composition of this map $\sigma^\ell(u) \in \Real^p$ with entries $[\sigma^\ell(u)]_{j} = u_{j-\ell \mod p}$. A circulant matrix $C \in \Real^{p \times p}$ is determined by a vector $\c = [c_0, \ldots, c_{p-1}] \in \Real^p$, and has form:
\[
C = \begin{pmatrix}
    \horzbar & \c & \horzbar\\
    \horzbar & \sigma(\c) & \horzbar\\
    \vdots & \vdots & \vdots \\
    \horzbar & \sigma^{p-1}(\c) & \horzbar
\end{pmatrix}~.
\]

Namely, circulant matrices are those in which each row is shifted one element to the right of its preceding row.  Hankel matrices are also determined by a vector $\c$, but the rows are $\c, \sigma^{-1}(\c), \ldots, \sigma^{-(p-1)}(\c)$.  While circulant matrices have constant diagonals, Hankel matrices have constant anti-diagonals. To ease terminology, we will use the word circulant to refer to Hankel or circulant matrices as previously defined. 

\begin{algorithm}[t!]
\caption{Recursive Feature Machine (RFM) \citep{rfm_science}}\label{alg: RFM}
\begin{algorithmic}
\Require $X, y, k, T, L$ \Comment{Train data: $(X, y)$, base kernel: $k$, iters.: $T$, matrix power: $s$, and bandwidth: $L$}
\State $M_0 = I_{d}$
\For{$t = 0,\ldots,T-1$}
    \State Solve $\alpha \leftarrow k(X,X;M_t)^{-1}y$ \Comment{$f^{(t)}(x) = k(x,X;M_t)\alpha$}
    \State $M_{t+1} \leftarrow [\AGOPk(f^{(t)})]^{s}$ 
\EndFor
\\\Return $\alpha, M_{T-1}$ \Comment{Solution to kernel regression: $\alpha$, and feature matrix: $M_{T-1}$}
\end{algorithmic}
\end{algorithm}

\paragraph{Average Gradient Outer Product (AGOP).}
The AGOP matrix, which will be central to our discussion, is defined as follows.   

\begin{definition}[AGOP] 
Given a predictor $f: \R^d \to \R^c$ with $c$ outputs, $f(x) \equiv [f_0(x), \ldots, f_{c-1}(x)]$, let $\dd{f(x')}{x} \in \R^{d \times c}$ be the Jacobian of $f$ evaluated at some point $x' \in \R^d$ with entries $[\dd{f(x')}{x}]_{\ell,s} = \dd{f_\ell(x')}{x_s}$.\footnote{While the Jacobian is typically defined as a linear map from $\mathbb{R}^{d} \to \mathbb{R}^{c}$, we will use its transpose in this work. When $c=1$, this is simply the gradient of $f$.} Then, for $f$ trained on a set of data points $\{x^{(j)} \}_{j=1}^n$, with $x^{(j)} \in \R^d$, the Average Gradient Outer Product (AGOP), $\AGOP$, is defined as,
\begin{align}
    \AGOP(f; \{x^{(j)}\}_{j=1}^{n}) &= \frac{1}{n} \sum_{j=1}^n \dd{f(x^{(j)})}{x} \dd{f(x^{(j)})}{x}\tran \in \R^{d \times d}.
\end{align}
\end{definition}
For simplicity, we omit the dependence on the dataset in the notation. 
Top eigenvectors of AGOP can be viewed as the ``most relevant'' input features,  those input directions that influence the output of a general predictor (for example, a kernel machines or a neural network) the most. As a consequence, the AGOP can be viewed as a task-specific transformation that can be used to amplify relevant features and improve sample efficiency of machine learning models.

Indeed, a line of prior works~\citep{HsuEGOP, TrivediEGOP, SpokoinyEGOP, KaneEGOP} have used the AGOP to improve the sample efficiency of predictors trained on multi-index models, a class of predictive tasks in which the target function depends on a low-rank subspace of the data. 
The AGOP was recently shown to be a key mechanism for understanding feature learning in neural networks and enabling feature learning in general machine learning models~\cite{rfm_science}.
Though the study of AGOP has been motivated by these multi-index examples, we will see that the AGOP can be used to recover useful features for modular arithmetic that are, in fact, not low-rank.

\paragraph{AGOP and feature learning in neural networks.} Prior work~\cite{rfm_science} posited that AGOP was a mechanism through which neural networks learn features.  
In particular, the authors posited the \textit{Neural Feature Ansatz (NFA)} stating that for any layer $\ell$ of a trained neural network with weights $W_{\ell}$, the \textit{Neural Feature Matrix (NFM)}, $W_{\ell}^T W_{\ell}$, are highly correlated to the AGOP of the model computed with respect to the input of layer $\ell$.  The NFA suggests that neural networks learn features at each layer by utilizing the AGOP.  For more details on the NFA, see Appendix~\ref{app: nfa}.

\begin{figure}[t!]
    \centering
    \includegraphics[width=1.0\textwidth]{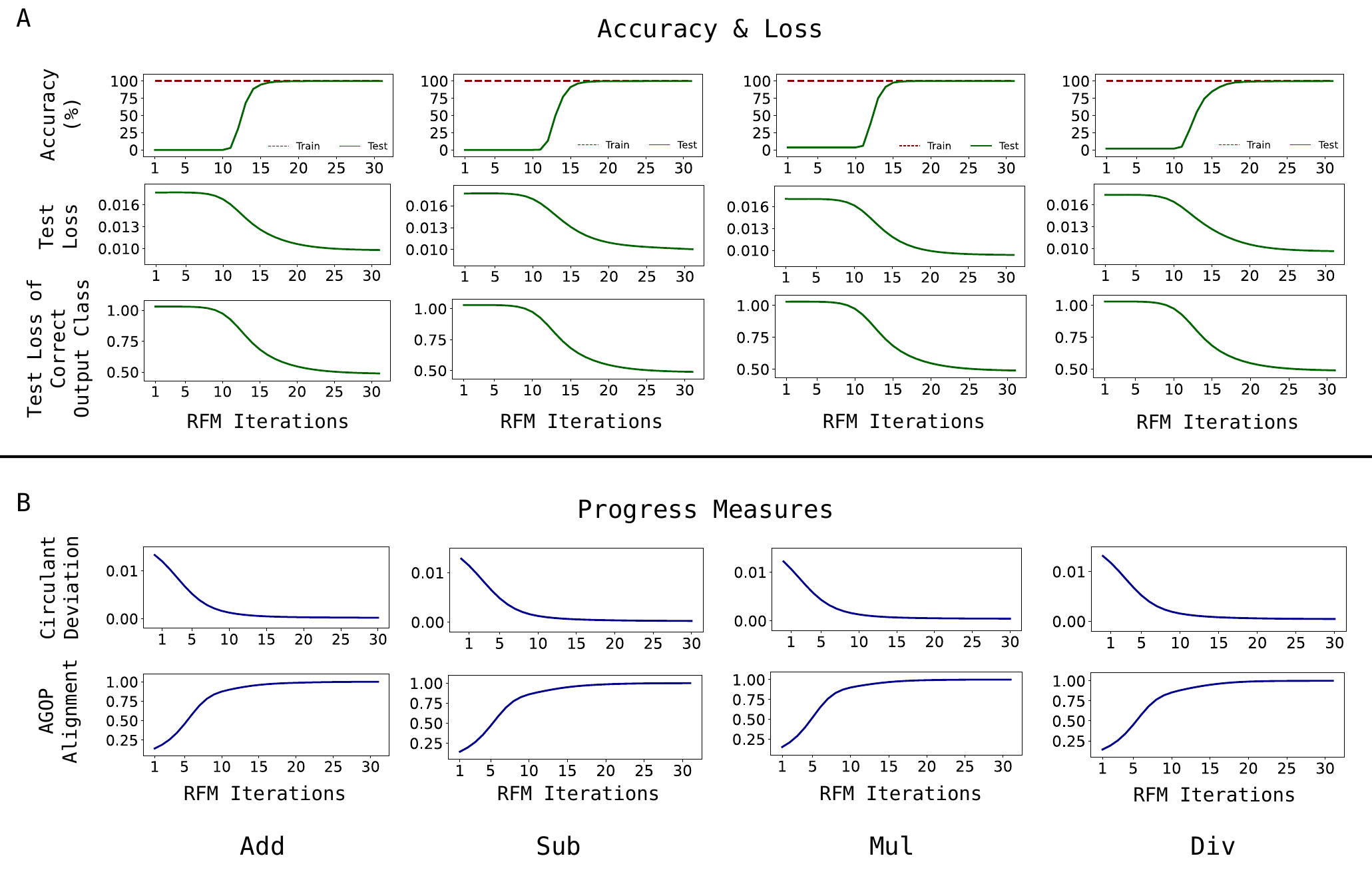}
    \caption{RFM with the quadratic kernel on modular arithmetic with modulus $p = 61$ trained for 30 iterations.
    (A) Test accuracy, test loss (mean squared error) over all output coordinates, and test loss of the correct class output coordinate do not change in the first 8 iterations and then, sharply transition.
    (B) Circulant deviation and AGOP alignment show gradual progress towards generalizing solutions despite accuracy and loss metrics not changing in the initial iterations. For multiplication (Mul) and division (Div), circulant deviation is measured with respect to the feature sub-matrices after reordering by the discrete logarithm.}
    \label{fig:p61_kernel_curves}
\end{figure}

\paragraph{Recursive Feature Machine (RFM).} Importantly, AGOP can be computed for any differentiable predictor, including those such as kernel machines that have no native feature learning mechanism. As such, the authors of~\cite{rfm_science} developed an algorithm known as RFM, which iteratively uses the AGOP to extract features. Below, we present the RFM algorithm used in conjunction with kernel machines.  Suppose we are given data samples $(X, y) \in \mathbb{R}^{n \times d} \times \mathbb{R}^{n}$ where $X$ contains $n$ samples denoted $\{x^{(j)} \}_{j=1}^{n}$. Given an initial symmetric positive-definite matrix $M_0 \in \mathbb{R}^{d \times d}$, and Mahalanobis kernel $k(\cdot, \cdot\,;M): \mathbb{R}^{d} \times \mathbb{R}^{d} \to \mathbb{R}$, RFM iterates the following steps for $t \in [T]$:
\begin{align}
    &\textit{Step 1 (Predictor training): } f^{(t)}(x) = k(x, X ; M_t)\alpha  ~\text{with}~ \alpha = k(X, X; M_t)^{-1} y ~;  \\
    &\textit{Step 2 (AGOP update): } M_{t+1} = [G(f^{(t)})]^{s}~;
\end{align}
where $s > 0 $ is a matrix power and  $k(X,X;M) \in \Real^{n \times n}$ denotes the matrix with entries $[k(X,X;M)]_{j_1j_2} = k(x^{(j_1)}, x^{(j_2)}; M)$ for $j_1, j_2 \in [n]$.  In this work, we select $s = \frac{1}{2}$ for all experiments (see Algorithm~\ref{alg: RFM}).  We use the following two Mahalanobis kernels: (1) the quadratic kernel, $k(x, x'; M) = \round{x\tran M x'}^2$ ; and (2) the Gaussian kernel $k(x, x'; M) = \exp\round{-\frac{\snorm{x - x'}_M^2}{L}}$, where for $z \in \mathbb{R}^{d}$,  $\snorm{z}_M^2 = z^\top M z$, and $L$ is the bandwidth.

\begin{wrapfigure}{R}{0.5\textwidth}
    \centering
    \includegraphics[width=0.5\textwidth]{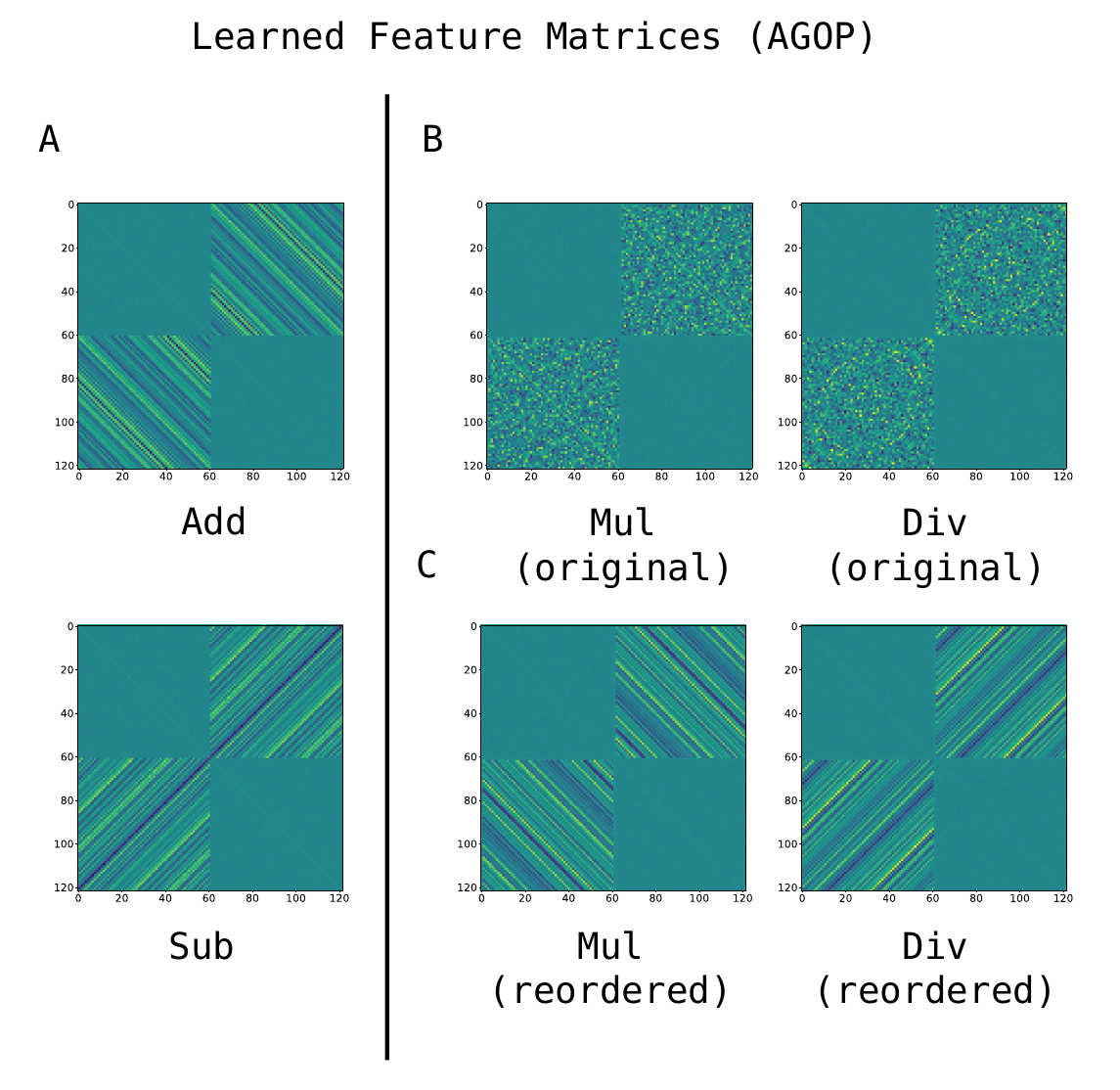}
\caption{RFM with the quadratic kernel for modular arithmetic tasks with modulus $p = 61$.
    (A) The square root of the kernel AGOPs for addition (Add), subtraction (Sub) visualized without their diagonals to emphasize circulant structure in off-diagonal blocks.
    (B)  Square root of the kernel AGOPs for multiplication (Mul), division (Div).
    (C) For Mul and Div, rows and columns of the individual sub-matrices of the AGOP are re-ordered by the discrete logarithm base $2$, revealing the block-circulant structure in the features.}
    \label{fig:p61_kernel_agops}
    \vspace{-3em}
\end{wrapfigure}

\section{Emergence with Recursive Feature Machines}
\label{sec: rfm emergence}

In this section, we will demonstrate that RFM (Algorithm~\ref{alg: RFM})  exhibits sharp transitions in performance on four modular arithmetic tasks (addition, subtraction, multiplication, and division) due to the emergence of features, which are block-circulant matrices.  

We will use a modulus of $p = 61$ and train RFM  with quadratic and Gaussian kernel machines (experimental details are provided in Appendix~\ref{apdx:model_training_details}).  As we solve kernel ridgeless regression exactly,  all iterations of RFM result in zero training loss and 100\% training accuracy.  The top two rows of Fig.~\ref{fig:p61_kernel_curves}A  show that the first several iterations of RFM result in  near-zero test accuracy and approximately constant, large test loss.  Despite these standard progress measures initially not changing, continuing to iterate RFM leads to a dramatic, sharp increase to 100\% test accuracy and a corresponding decrease in the test loss later in the iteration process.

\paragraph{Sharp transition in loss of correct output coordinate.} 
It is important to note that our total loss function is the square loss averaged over $p=61$ classes.  It is thus plausible that, due to averaging, the near-constancy of the total square loss over the first few iterations conceals steady improvements in the predictions of the correct class. However, it is not the case.  In Fig.~\ref{fig:p61_kernel_curves}A (third row) we show that the test loss for the output coordinate (logit) corresponding to the correct class closely tracks the total test loss. 

\paragraph{Emergence of block-circulant features in RFM.}  In order to understand how RFM generalizes, we visualize the $2p \times 2p$ matrix given by the square root of the AGOP of the final iteration of RFM.  We refer to these matrices as \textit{feature matrices}.  We first visualize the feature matrices for RFM trained on modular addition/subtraction in Fig.~\ref{fig:p61_kernel_agops}A. Their visually-evident striped structure suggests the following more precise characterization.

\begin{observation}[Block-circulant features]
Feature matrix $M^* \in \Real^{2p \times 2p}$ at the final iteration of RFM on modular addition/subtraction is of the form
\begin{align}
\label{eq: circulant ansatz}
M^* = \begin{pmatrix}
A  & C\tran\\
C & A
\end{pmatrix},
\end{align}
where $A, C \in \mathbb{R}^{p \times p}$ and $C$ is an asymmetric circulant matrix (i.e. $C$ is not a degenerate circulant such as $C=I$ or $C=\one\one\tran$). Furthermore, we note that  $A$ is of the form $c_1 I + c_2 \one\one\tran$ for constants $c_1, c_2$.
\label{circulant_ansatz} 
\end{observation}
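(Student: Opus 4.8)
Note that Observation~\ref{circulant_ansatz} is stated as an empirical \emph{observation} rather than a theorem, so what is really wanted is a structural explanation: why the symmetry of the modular-addition task forces the AGOP feature matrix into the block form \eqref{eq: circulant ansatz}. The plan is to argue that the AGOP inherits the symmetry group of the learning problem, and that this symmetry group is exactly the group that pins down the block-circulant structure.

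\textbf{Step 1: Identify the symmetry group of the task.} First I would observe that the one-hot encoded dataset for $f^*(a,b) = (a+b)\bmod p$ is invariant (as a set of input/label pairs, at least in the full-data limit $r=1$) under two families of transformations on $\R^{2p} = \R^p \oplus \R^p$: (i) the simultaneous cyclic shift $(\e_a,\e_b) \mapsto (\sigma^\ell \e_a, \e_b)$ composed with the output shift $\e_{a+b}\mapsto \sigma^\ell \e_{a+b}$ (and likewise shifting the $b$-block), and (ii) the swap $(\e_a,\e_b)\mapsto(\e_b,\e_a)$, since $a+b=b+a$. Together these generate a group $\mathcal{G}$ acting on $\R^{2p}$ by a representation $\rho$ whose image on the input side is generated by $\mathrm{blockdiag}(P,P)$ for the cyclic shift matrix $P$, together with the block-swap $\begin{pmatrix}0 & I\\ I & 0\end{pmatrix}$.

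\textbf{Step 2: Show the AGOP is $\mathcal{G}$-equivariant.} Next I would argue that because the kernel is a Mahalanobis kernel and RFM is initialized at $M_0 = I$, which is $\mathcal{G}$-invariant, every iterate stays invariant: if the trained predictor $f^{(t)}$ satisfies $f^{(t)}(\rho(x)) = \pi(g)\,f^{(t)}(x)$ for the output-side permutation $\pi(g)$ (which follows from uniqueness of the kernel-ridgeless solution together with the invariance of the Gram matrix under a $\mathcal{G}$-invariant $M_t$), then by the chain rule $\dd{f^{(t)}(\rho(x))}{x} = \rho(g)\,\dd{f^{(t)}(x)}{x}\,\pi(g)\tran$, and averaging the outer products over the ($\mathcal{G}$-invariant) dataset gives $M_{t+1} = \rho(g)\,M_{t+1}\,\rho(g)\tran$ for all $g$ (the matrix power $s=1/2$ preserves this since it is a spectral function). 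Hence $M^*$ lies in the commutant of the representation $\rho$.

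\textbf{Step 3: Compute the commutant.} Then I would compute which $2p\times 2p$ matrices commute with all of $\{\mathrm{blockdiag}(P^\ell,P^\ell)\}$ and with the block-swap. Commuting with $\mathrm{blockdiag}(P,P)$ forces each $p\times p$ block of $M^*$ to be a polynomial in $P$, i.e.\ circulant; commuting with the swap forces the diagonal blocks equal and the off-diagonal blocks transpose/adjoint of each other, giving exactly $\begin{pmatrix}A & C\tran\\ C & A\end{pmatrix}$ with $A,C$ circulant. The remaining claims — that $A = c_1 I + c_2\one\one\tran$ and that $C$ is a non-degenerate circulant — I would argue are not forced by symmetry alone but follow from a finer analysis: the $\one\one\tran$ and $I$ components of $A$ are the two pieces of a circulant matrix that are \emph{additionally} fixed by the larger symmetry group including the permutation action $a\mapsto \tau a$ for any $\tau\in\Z_p^*$ together with the corresponding coordinate relabeling (these are the only circulant projectors invariant under all such $\tau$, by a short character-theoretic argument on $\widehat{\Z_p}$), while the off-diagonal block $C$ carries the genuine feature information and one checks it is non-constant because the predictor does depend nontrivially on the relative shift between $a$ and $b$ (otherwise it could not distinguish different labels).

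\textbf{Main obstacle.} The delicate point, which I expect to be the crux, is justifying Step 2 rigorously when $r<1$: with a random training subset the dataset is only \emph{approximately} $\mathcal{G}$-invariant, so the exact equivariance argument must be replaced either by (a) an argument that the population AGOP is what matters and concentration handles the sampling, or (b) restricting the clean statement to the full-data regime and treating the observed structure at $r<1$ as an approximate consequence. A secondary subtlety is that the quadratic kernel $k(x,x';M) = (x\tran M x')^2$ combined with $M^{1/2}$ does preserve $\mathcal{G}$-invariance, but one must check the Gaussian case as well, where $\snorm{x-x'}_M^2$ is manifestly invariant under orthogonal $\rho(g)$ provided $M$ is $\mathcal{G}$-invariant, so the induction still closes. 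I would present the full-data, quadratic-kernel case as the clean statement and relegate the $r<1$ discussion to a remark.
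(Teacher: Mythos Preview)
The paper does not attempt a derivation of Observation~\ref{circulant_ansatz}; it is offered purely as an empirical finding, supported by visualizing the learned feature matrices (Fig.~\ref{fig:p61_kernel_agops}A), by tracking the circulant-deviation metric toward zero (Fig.~\ref{fig:p61_kernel_curves}B), and by the random-circulant experiments of Fig.~\ref{fig: random_circulants}. Your symmetry argument is therefore already more ambitious than what the paper does.

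However, the argument has a real gap. In Step~1 you (correctly) identify the \emph{independent} shifts $(\e_a,\e_b)\mapsto(\sigma^\ell\e_a,\e_b)$ and $(\e_a,\e_b)\mapsto(\e_a,\sigma^m\e_b)$ as symmetries of the full addition dataset, each paired with its own output permutation; yet in Step~3 you compute the commutant only of the \emph{simultaneous} shift $\mathrm{blockdiag}(P,P)$ together with the swap. If you instead use the full group you identified, the conclusion changes: commuting with $\mathrm{blockdiag}(P,I)$ forces the top-right block to satisfy $PC^\top=C^\top$, so every column of $C^\top$ lies in $\mathrm{span}(\one)$; adding $\mathrm{blockdiag}(I,P)$ then gives $C=c\,\one\one^\top$. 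Thus, carried out consistently in the full-data regime, your equivariance induction predicts precisely a \emph{degenerate} off-diagonal block --- exactly what the Observation rules out. The same over-constraint afflicts your account of $A$ versus $C$: the scaling symmetry $(a,b)\mapsto(\tau a,\tau b)$ acts as $\mathrm{blockdiag}(Q_\tau,Q_\tau)$ and hence constrains the diagonal and off-diagonal blocks identically; it cannot explain why $A=c_1 I+c_2\one\one^\top$ while $C$ remains a generic circulant.

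The upshot is that the non-degenerate circulant structure in $C$ is not a consequence of the task symmetries; it must come from the symmetry breaking induced by the random training subset at $r<1$ together with the particular RFM dynamics on that subset. Your proposed fallback of ``proving the clean statement at $r=1$'' would therefore yield the wrong answer, and the concentration route would at best show the AGOP is close to a degenerate matrix rather than to a generic circulant. This is presumably why the paper leaves the Observation as an empirical claim and does not try to derive it.
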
 

Similarly to addition and subtraction,  RFM successfully learns multiplication and division. Yet, in contrast to addition and subtraction,
the structure of feature matrices for these tasks, shown in  Fig.~\ref{fig:p61_kernel_agops}B, is not at all obvious. 
Nevertheless, re-ordering the rows and columns of the feature matrices for these tasks brings out their hidden circulant structure  of the form stated in Eq.~\eqref{eq: circulant ansatz}.
We show the effect of re-ordering in  Fig.~\ref{fig:p61_kernel_agops}C (see also Appendix Fig.~\ref{fig:reordered-vs-not-reordered} for the evolution of re-ordered and original features during training).
  We briefly discuss the reordering procedure below and provide further details in Appendix~\ref{app: reordering by generator}. 

Our reordering procedure uses the fact of group theory that the multiplicative group $\mathbb{Z}_p^*$ is a cyclic group of order $p-1$ (e.g.,~\cite{koblitz1994course}). By definition of the cyclic group, there exists at least one element $g \in \mathbb{Z}_p^*$, known as a \textit{generator}, such that $\mathbb{Z}_p^* = \{g^i ~;~ i \in \{1, \ldots, p-1\} \}$.  Using zero-indexing for our coordinates, we reorder rows and columns of the off-diagonal blocks of the feature matrices by moving the matrix entry in position $(r, c)$ with $r, c \in \mathbb{Z}_p^*$ to position $(i, j)$ where $r = g^i$ and $c = g^j$.  The entries in row zero and column zero are not reordered as they are identically zero (multiplying any integer by zero results in zero).  In the setting of modular multiplication/division, the map taking $g^i$ to $i$ is known as the \textit{discrete logarithm} base $g$~\cite[Ch.3]{koblitz1994course}. 

It is natural to expect block-circulant feature matrices to arise in modular multiplication/division after reordering by the discrete logarithm as (1) the discrete logarithm converts modular multiplication/division into modular addition/subtraction and (2) we  already observed block-circulant feature matrices in addition/subtraction in Fig.~\ref{fig:p61_kernel_agops}A.  
We  note the recent  work~\cite{doshi2024grokkingmodularpolynomials} also used the discrete logarithm to reorder coordinates in the context of constructing a solution for solving modular multiplication with neural networks.

\paragraph{Progress measures.} 
We will propose and examine two measures of feature learning, {\it circulant deviation} and {\it AGOP alignment}.

\emph{Circulant deviation.}
The fact that the final feature matrices contain circulant sub-blocks suggests a natural progress measure for learning modular arithmetic with RFM: namely, how far AGOP feature matrices are from a block-circulant matrix. 

 For a feature matrix $M$, let $A$ denote the bottom-left sub-block of $M$. We define circulant deviation as the total variance of the (wrapped) diagonals of $A$ normalized by the norm $\|A\|_F^2$. In particular, let $\mathcal{S} \in \Real^{p \times p} \rightarrow \Real^{p \times p}$ denote the shift operator, which shifts the $\ell$-th row of the matrix by  $\ell$ positions  to the right. Also let 
 $$\mathrm{Var}(\v) = \sum_{j=0}^{p-1} (v_j - \mathbb{E}\v)^2
 $$ be the variance of a vector $\v$. If $A[j]$ denotes the $j$-th column of $A$, we define circulant deviation $\mathcal{D}$ as 
\begin{align*}
    \mathcal{D}(A) = \frac{1}{\|A\|_F^2} \sum_{j=0}^{p-1} \mathrm{Var}(\mathcal{S}(A)[j]). \tag{Circulant deviation}
\end{align*}
Circulant deviation is a natural measure of how far a matrix is from a circulant,  since circulant matrices are constant along their (wrapped) diagonals and, therefore, have a circulant deviation of $0$. 

We see in Fig.~\ref{fig:p61_kernel_curves}B (top row) that circulant deviation exhibits gradual improvement through the course of training  with RFM. We find that for the first 10 iterations, while the training loss is numerically zero and the test loss does not improve, circulant deviation exhibits gradual, nearly linear, improvement. The improvements in circulant deviation reflect visual improvements in features, as was also shown in Fig.~\ref{fig:intro_fig_p59}. These curves also provide further support for Observation~\ref{circulant_ansatz}, as the circulant deviation is close to $0$ at the end of training. 

Note that circulant deviation is specific to modular arithmetic and depends crucially on the observation that the feature matrices contained circulant blocks (upon reordering for multiplication/division).  For more general tasks, we may not be able to identify such structure, and thus, we would need a more general progress measure that does not require a precise description of structures in generalizing features. To this end, we propose a second progress measure, AGOP alignment, that applies beyond modular arithmetic.

\emph{AGOP alignment.} Given two matrices $A, B \in \mathbb{R}^{d \times d}$, let $\rho(A, B)$ denote the standard cosine similarity between these two matrices when vectorized.  Specifically, let $\tilde{A}, \tilde{B} \in \mathbb{R}^{d^2}$ denote the vectorization of $A$ and $B$ respectively, then
\begin{align}
\label{eq: AGOP alignment}
    \rho(A,B) = \frac{\langle \tilde{A},\tilde{B} \rangle }{\|\tilde{A}\|\,\|\tilde{B}\|}~.
\end{align}
If $M_t$ denotes the AGOP at iteration $t$ of RFM (or epoch $t$ of a neural network) and $M^*$ denotes the final AGOP of the trained RFM (or neural network), then AGOP alignment at iteration $t$ is given by $\rho(M_t,M^*)$. The same measure of alignment was  used in \cite{catapultsAGOP}, except that alignment in~\cite{catapultsAGOP} was computed with respect to the AGOP of the ground truth model. Note that as modular operations are discrete, in our setting there is no unique ground truth model for which AGOP can be computed.  

Like circulant deviation, AGOP alignment exhibits gradual improvement in the regime that test loss is constant and large (see Fig.~\ref{fig:p61_kernel_curves}B, bottom row). Moreover, AGOP alignment is a more general progress measure since it does not require assumptions  on the structure of the AGOP. For instance, AGOP alignment can be measured without reordering for modular multiplication/division.  While AGOP alignment does not require a specific form of the final features, it is still an {\it a posteriori} measurement of progress as it requires access to the features of a fully trained model.

\begin{wrapfigure}{R}{0.45\textwidth}
    \centering
    \includegraphics[width=1.0\linewidth]{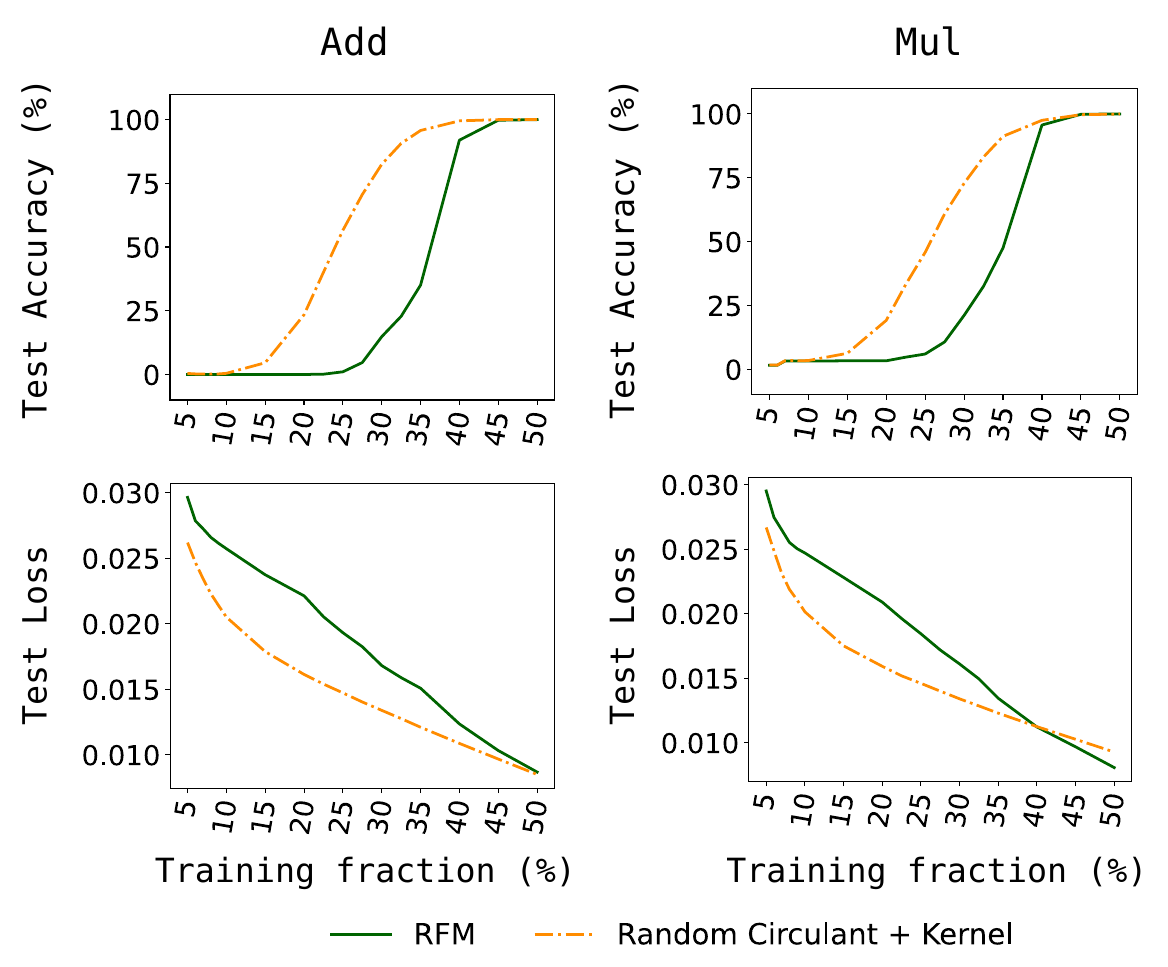}
    \caption{Random circulant features generalize with standard kernels for modular arithmetic tasks. RFM with the Gaussian kernel on modular addition (Add) and multiplication (Mul) for modulus $p = 61$ is compared to a standard Gaussian kernel machine trained on random circulant features (for Mul, the sub-blocks are circulant after re-ordering by the discrete logarithm base $2$).}
\label{fig: random_circulants}
\end{wrapfigure}

\textbf{Random circulant features allow standard kernels to generalize.}
We conclude this section by providing further evidence that the form of feature matrices given in Observation~\ref{circulant_ansatz} is key to enabling generalization in kernel machines trained to solve modular arithmetic tasks.  Note that in Observation~\ref{circulant_ansatz}, we only stated that the off-diagonal matrix $C$ was a circulant.  As we now show, a transformation with a {\it generic} block-circulant matrix  enables kernels machines to learn modular arithmetic. 
Namely, we generate a random circulant matrix $C$ by first sampling entries of the first column i.i.d. from the uniform distribution on $[0,1]\subset \R$ and then shifting the column to generate the remaining columns of $C$.
We proceed to construct the matrix $M^*$ in Observation~\ref{circulant_ansatz} with $c_1 = 1, c_2 = -\frac{1}{p}$.  For modular addition, we transform the input data by mapping $x_{ab} = \e_a \oplus \e_b$ to
\begin{align}
    \label{eq: random circ transform}
    \tilde{x}_{ab} = (M^*)^{\frac{1}{4}} x_{ab}~,
\end{align}
and then train on the new data pairs $(\tilde{x}_{ab}, \e_{a+b \mod p})$ for a subset of all possible pairs $(a, b) \in \Z_p^2$.  Note that transforming data with $(M^*)^{\frac{1}{4}}$ is akin to using $s = \frac{1}{2}$ in the RFM algorithm.  We do the same for modular multiplication after reordering the random circulant by the discrete logarithm as described above. 
The experiments in Fig.~\ref{fig: random_circulants} demonstrate that standard kernel machines trained on feature matrices with random circulant blocks\footnote{We have found that random Hankel features also enable generalization with standard kernel machines, similarly to random circulant features.}   outperform kernel-RFMs that  learn such features through AGOP. 

Additionally, we also find that directly enforcing circulant structure in the sub-matrices of $M_t$ throughout RFM iterations accelerates grokking and improves test loss (see Appendix~\ref{app: enforcing circulant}, Appendix Fig.~\ref{fig:verify_obs1}). 

These experiments provide strong evidence that the structure in Observation~\ref{circulant_ansatz} is key for generalization on modular arithmetic and, furthermore,  {\it no additional structure} beyond a generic circulant is required.   

\paragraph{Remark: multiple skills.} Throughout this section, we focused on modular arithmetic settings for a single task.  In more general domains such as language, one may expect there to be many ``skills'' that need to be learned.  In such settings, it is possible that these skills are grokked at different rates.  While a full discussion is beyond the scope of this work, to illustrate this behavior, we performed additional experiments in Appendix~\ref{app: multitask grokking} and Appendix Fig.~\ref{fig:multitask} where we train RFM on a pair of modular arithmetic tasks simultaneously and demonstrate that different tasks are indeed grokked at different points throughout training.

\section{Emergence in neural networks through AGOP}
\label{sec: neural nets}

\begin{figure}[t]
    \centering
    \includegraphics[width=1.0\textwidth]{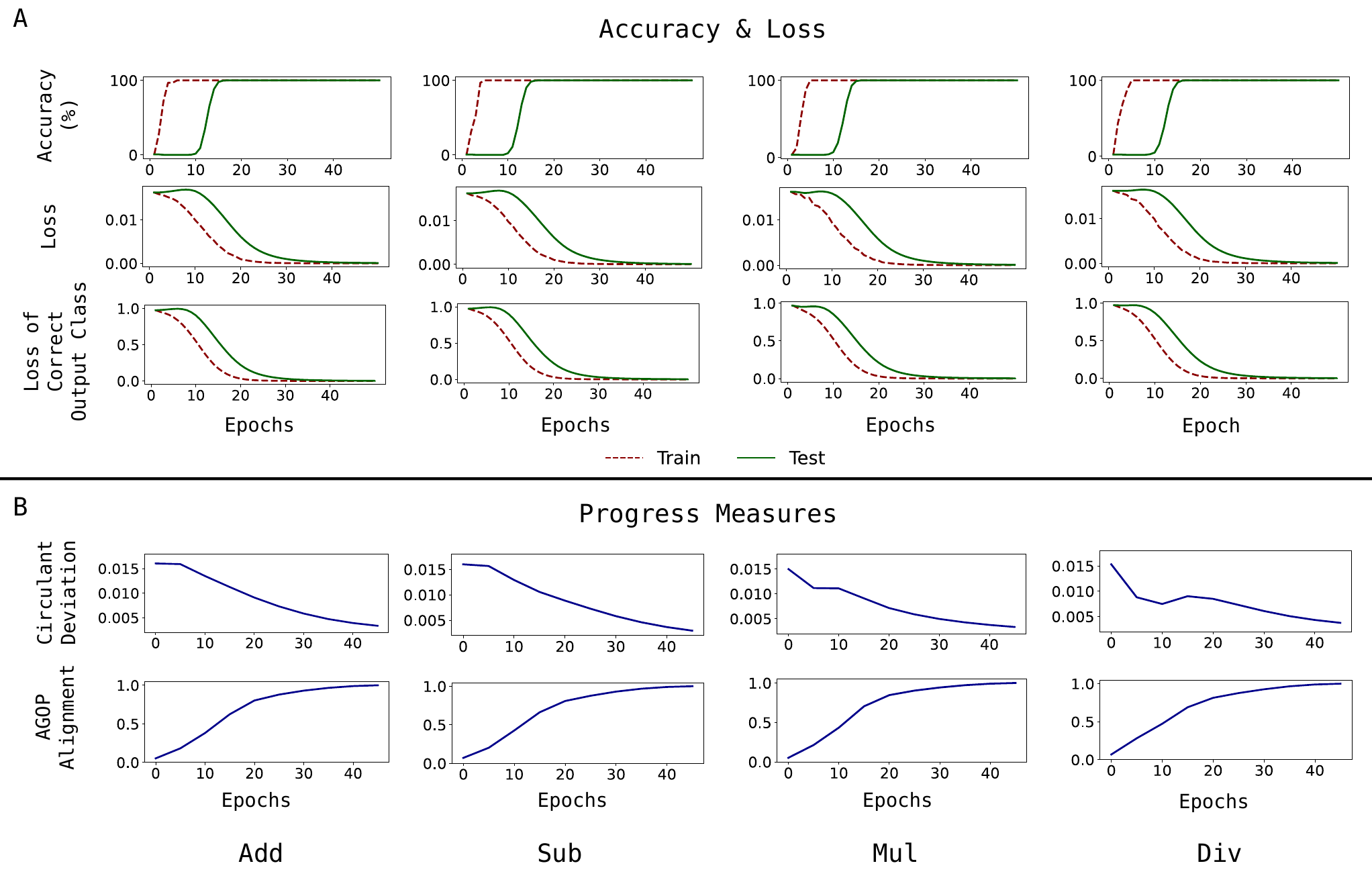}
    \caption{One hidden layer fully-connected networks with quadratic activations trained on modular arithmetic with modulus $p = 61$ trained for 50 epochs with the square loss. 
    (A) Test accuracy, test loss over all outputs, and test loss of the correct class output do not change in the initial iterations.
    (B) Progress measures for circulant deviation and AGOP alignment. Circulant deviation for Mul and Div are computed after reordering by the discrete logarithm base $2$.
    }
    \label{fig:nn_progress_measures}
\end{figure}

We now show that grokking in two-layer neural networks relies on the same principles as grokking by RFM.  Specifically we demonstrate that (1)  block-circulant features are key to neural networks grokking modular arithmetic; and (2) our measures (circulant deviation and AGOP alignment) indicate gradual progress towards generalization, while standard measures of generalization exhibit sharp transitions. All experimental details are provided in Appendix~\ref{apdx:model_training_details}.

\paragraph{Grokking with neural networks.} We first reproduce grokking with modular arithmetic using fully-connected networks as identified in prior works (Fig.~\ref{fig:nn_progress_measures}A) \citep{gromov2023grokking}. In particular, we train one hidden layer fully connected networks $f: \mathbb{R}^{2p} \to \mathbb{R}^{p}$ of the form 
\begin{equation}
\label{eq: neural net}
    f(x) = W_2 \sigma (W_1  x)
\end{equation}
with quadratic activation $\sigma(z) = z^2$ on modulus $p=61$ data with a training fraction $50\%$.  We train networks using AdamW \citep{AdamW} with a batch size of 32. 

Consistent with prior work \citep{gromov2023grokking} and analogously to RFMs, neural networks exhibit an initial training period where the train accuracy reaches 100\%, while test accuracy is at 0\% and test loss does not improve (see Fig.~\ref{fig:nn_progress_measures}A).\footnote{Of course, unlike RFM, the training loss decreases gradually throughout the optimization process.}
After this point, we see that the accuracy rapidly improves to achieve perfect generalization. As we did for RFM, we verify that the sharp transition in test loss is not an artifact of averaging the loss over all output coordinates.  In the third row of Fig.~\ref{fig:nn_progress_measures}A we show that the test loss of the individual correct output coordinate closely tracks the total loss, exhibiting the same transition.

\begin{figure}[t!]
    \centering
    \includegraphics[width=0.85\textwidth]{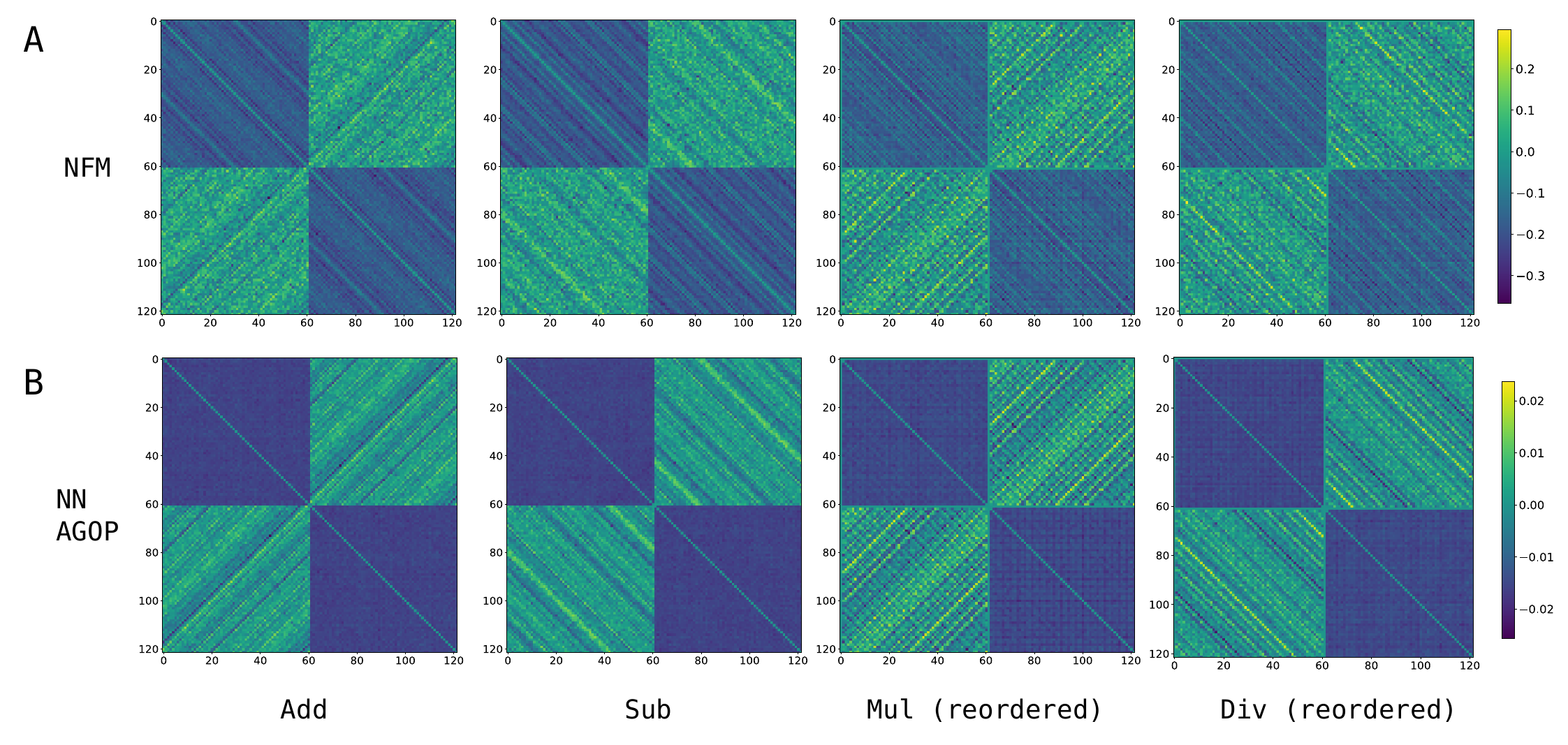}
    \caption{Feature matrices from one hidden layer neural networks with quadratic activations trained on addition, subtraction, multiplication, and division modulo $61$. 
    The Pearson correlations between the NFM and square root of the AGOP for each task are $0.955$ (Add), $0.942$ (Sub), $0.924$ (Mul), $0.929$ (Div).
    Mul and Div are shown after reordering by the discrete logarithm base $2$.
    }
    \label{fig:p61_nn_agop_nfm}
\end{figure}

\paragraph{Emergence of block-circulant features in neural networks.} In order to understand the features learned by neural networks for modular arithmetic, we visualize the first layer Neural Feature Matrix, which is defined as follows.  

\begin{definition}
\label{def: NFM}
Given a fully connected network of the form in Eq.~\eqref{eq: neural net}, the first layer \textit{Neural Feature Matrix} (NFM) is the matrix $W_1\tran W_1 \in \R^{2p \times 2p}$.
\end{definition}

We note the NFM is also the un-centered covariance of network weights and has been used in prior work in order to understand the features learned by various neural network architectures at any layer \citep{rfm_science,trockman2022understandingcovariancestructureconvolutional}. Fig.~\ref{fig:p61_nn_agop_nfm}A displays the NFM for one hidden layer neural networks with quadratic activations trained on modular arithmetic tasks.  For addition/subtraction, we find that the NFM exhibits block circulant structure, akin to the feature matrix for RFM.  As described in Section~\ref{sec: rfm emergence} and Appendix~\ref{app: reordering by generator}, we reorder the NFM for networks trained on multiplication/division with respect to a generator for $\Zp^*$ in order to observe block-circulant structure (see Appendix Fig.~\ref{fig:p61_nn_kernel_mul_div_reorder}A for a comparison of multiplication/division NFMs before and after reordering).  The block-circulant structure in both the NFM and the feature matrix of RFM  suggests that the two models are learning similar sets of features. 

Note that neural networks automatically learn features, as is demonstrated by visualizing the NFMs.  The work~\cite{rfm_science} posited that AGOP is the mechanism through which these models learn features.  Indeed, the authors stated their claim in the form of the Neural Feature Ansatz (NFA), which states that NFMs are proportional to a matrix power of AGOP through training (see Eq.~\eqref{eq: nfa} for a restatement of the NFA).  As such, we alternatively compute the square root of the AGOP to examine the features learned by neural networks trained on modular arithmetic tasks.  We visualize the square root of the AGOPs of these trained models in Fig.~\ref{fig:p61_nn_agop_nfm}B.  Corroborating the findings from~\cite{rfm_science}, we find that the square root of the AGOP and the NFM are highly correlated (greater than $0.92$), where Pearson correlation is equal to cosine similarity after centering the inputs to be mean $0$. Moreover, we find that the square root of AGOP of neural networks again exhibits the same structure as stated in Eq.~\eqref{eq: circulant ansatz} of Observation~\ref{circulant_ansatz} (see Appendix Fig.~\ref{fig:p61_nn_kernel_mul_div_reorder}B for a comparison of multiplication/division AGOPs before and after reordering). 

\begin{figure}
    \centering
    \includegraphics[width=0.9\linewidth]{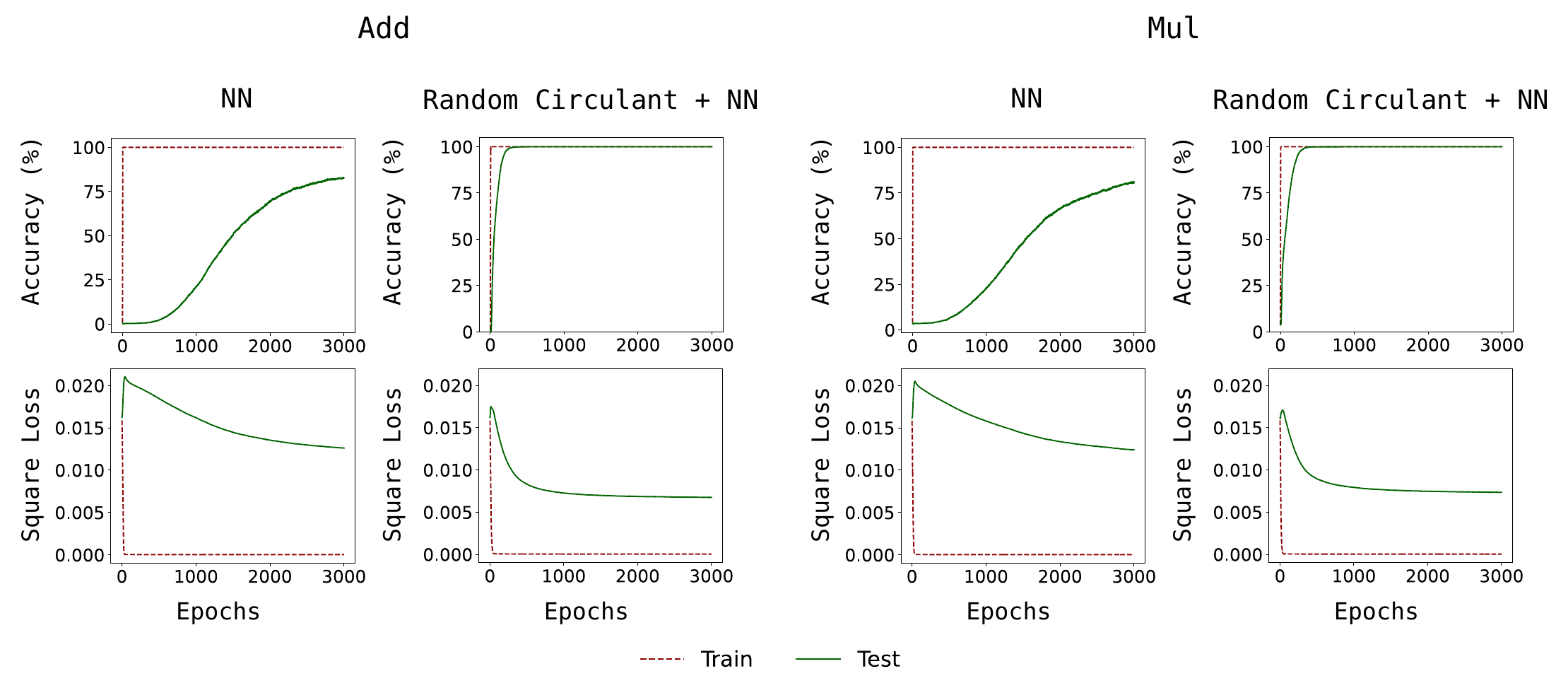}
    \caption{Random circulant features speed up generalization in neural networks for modular arithmetic tasks.
    We compare one hidden layer fully-connected networks with quadratic activations trained on modular addition and multiplication for $p = 61$ using standard one-hot encodings or using one-hot encodings transformed by random circulant matrices (re-ordered by the discrete logarithm in the case of multiplication).}
    \label{fig:nn_rand_circ}
\end{figure}

\paragraph{Random circulant maps improve generalization of neural networks.}
To further establish the importance of block-circulant features for solving modular arithmetic tasks with neural networks, we demonstrate that training networks on inputs transformed with a random block-circulant matrix greatly accelerates learning.  In  Fig.~\ref{fig:nn_rand_circ}, we compare the performance of neural networks trained on one-hot encoded modulo $p$ integers and the same integers transformed using a random circulant matrix generated using the procedure in Eq.~\eqref{eq: random circ transform}.  At a training fraction of $17.5\%$, we find that networks trained on transformed integers achieved $100\%$ test accuracy within several hundred epochs and exhibit little delayed generalization while networks trained on non-transformed integers do not achieve $100\%$ test accuracy even within $3000$ epochs. 

\paragraph{Progress measures.} Given that the square root of the AGOP of neural networks exhibits block-circulant structure, we can use circulant deviation and AGOP alignment to measure gradual progress of neural networks toward a generalizing solution.  As before, we measure circulant deviation in the case of multiplication/division after reordering the feature submatrix by a generator of $\Zp^*$.  In Fig.~\ref{fig:nn_progress_measures}B, we observe that our measures indicate gradual progress in contrast to sharp transitions in the standard measures of progress shown in Fig.~\ref{fig:nn_progress_measures}A.  Indeed, there is a period of $5$-$10$ epochs where circulant deviation and AGOP alignment improve while test loss is large (and test accuracy is small) and does not improve. Therefore, as was the case of RFM, these metrics reveal gradual progress of neural networks toward  generalizing solutions. 

\paragraph{AGOP regularization and weight decay.}  It has been argued in prior work that weight decay ($\ell_2$ regularization on network weights) is necessary for grokking to occur when training neural networks for modular arithmetic tasks \citep{varma2023explaininggrokkingcircuitefficiency,davies2023unifyinggrokkingdoubledescent,nanda2023progress}. Under the NFA (Eq.~\eqref{eq: nfa}), which states that $W_1\tran W_1$ is proportional to a matrix power of $G(f)$, we expect that performing weight decay on the first layer, i.e., penalizing the loss by $\|W_1\|_F^2 = \tr(W_1\tran W_1)$, should behave similarly to penalizing the trace of the AGOP, $\tr(G(f))$, during training.\footnote{We note this regularizer been used prior work where AGOP is called the Gram matrix of the input-output Jacobian~\cite{hoffman2019robustlearningjacobianregularization}.}  To this end, we compare the impact of using (1) no regularization; (2) weight decay; and (3) AGOP regularization when training neural networks on modular arithmetic tasks.  In Appendix Fig.~\ref{fig:agop_decay}, we find that, akin to weight decay, AGOP regularization leads to grokking in cases where using no regularization results in no grokking and poor generalization. These results provide further evidence that neural networks solve modular arithmetic by using the AGOP to learn features.

\section{Fourier multiplication algorithm from circulant features}
\label{sec: fourier multiplication}

We have seen so far that features containing circulant sub-blocks enable generalization for RFMs and neural networks across modular arithmetic tasks. We now provide theoretical support that shows how kernel machines equipped with such circulant features learn generalizing solutions. In particular, we show that there exist block-circulant feature matrices, as in Observation~\ref{circulant_ansatz}, such that kernel machines equipped with these features and trained on all available data for a given modulus $p$ solve modular arithmetic through the \emph{Fourier Multiplication Algorithm} (FMA). Notably, the FMA has been argued both empirically and theoretically in prior works to be the solution found by neural networks to solve modular arithmetic \citep{nanda2023progress, zhong2024clock}.  For completeness, we state the FMA for modular addition/subtraction from~\cite{nanda2023progress} below. While these prior works write this algorithm in terms of cosines and sines, our presentation simplifies the statement by using the DFT.

\paragraph{Fourier Multiplication Algorithm for modular addition/subtraction.}
Consider the modular addition task with $f^*(a,b) = (a+b) \mod p$. For a given input $x = x_{[1]} \oplus x_{[2]} \in \Real^{2p}$, the FMA generates a value for output class $\ell$, $y_{\mathrm{add}}(x;\ell)$, through the following computation: 
\begin{enumerate}
    \item Compute the Discrete Fourier Transform (DFT) for each digit vector $x_{[1]}$ and $x_{[2]}$, which we denote $\hat{x}_{[1]} = F x_{[1]}$ and $\hat{x}_{[2]} = F x_{[2]}$ where the matrix $F$ is defined in Eq.~\eqref{eq: DFT}. 
    \item Compute the element-wise product $\hat{x}_{[1]} \odot \hat{x}_{[2]}$.
    \item Return $\sqrt{p} ~\cdot~  \langle \hat{x}_{[1]} \odot \hat{x}_{[2]}, F \e_\ell \rangle_{\mathbb{C}}$ where $\e_{\ell}$ denotes $\ell$-th standard basis vector and $\langle \cdot , \cdot \rangle_{\mathbb{C}}$ denotes the complex inner product (see Eq.~\eqref{eq: Complex inner product}). 
\end{enumerate}
This algorithmic process can be written concisely in the following equation:
\begin{align}
\label{eqn: fma, addition}
    y_{\mathrm{add}}(x;\ell) = \sqrt{p} \cdot \inner{Fx_{[1]}\odot Fx_{[2]}, F\e_{\ell}}_\C~.
\end{align}
Note that for $x = \e_a \oplus \e_b$, the second step of the FMA reduces to 
\begin{align}
F\e_a \odot F\e_b = \frac{1}{\sqrt{p}} F\e_{(a+b) \mod p}~.
\end{align}
Using the fact that $F$ is a unitary matrix, the output of the FMA is given by 
\begin{align}
\sqrt{p} \cdot \inner{\frac{1}{\sqrt{p}}F\e_{(a+b) \mod p}, F\e_{\ell}}_\C = \e_{(a+b)\mod p}\tran F\tran \bar{F} \e_{\ell} = \e_{(a+b)\mod p}\tran \e_{\ell} =  \mathbbm{1}_{\{(a+b) \mod p = \ell\}}~.
\end{align}
Thus, the output of the FMA is a vector $\e_{(a+b) \mod p}$, which is equivalent to modular addition.   We provide an example of this algorithm for $p = 3$ in Appendix~\ref{app: FMA example}. 

\paragraph{Remarks.} We note that our description of the FMA uses all entries of the DFT, referred to as frequencies, while the algorithm as proposed in prior works allows for utilizing a subset of frequencies. Also note that the FMA for subtraction, written $y_{\mathrm{sub}}$, is similar and given by 
\begin{align}
\label{eqn: fma, subtraction}
y_{\mathrm{sub}}(x;\ell) = \sqrt{p} \cdot \inner{Fx_{[1]}\odot F\e_{p-\ell-1}, Fx_{[2]}}_\C~.
\end{align}

Having described the FMA, we now state our theorem.
\begin{theorem}
\label{thm: kernel fma}
Given all of the discrete data $\curly{\round{\e_a \oplus \e_b, \e_{(a-b) \mod p}}}_{a,b=0}^{p-1}$, for each output class $\ell \in \{0,\cdots,p-1\}$, suppose we train a separate kernel predictor 
$ f_\ell(x) = k(x,X;M_\ell)\alpha^{(\ell)}$ where $k(\cdot;\cdot;M_\ell)$ is a quadratic kernel with $
M_\ell = \begin{pmatrix}
0 & C^{\ell}\\
(C^\ell)\tran & 0
\end{pmatrix}$~
and $C \in \Real^{p \times p}$ is a circulant matrix with first row $\e_1$. When $\alpha^{(\ell)}$ is the solution to kernel ridgeless regression for each $\ell$, the kernel predictor $f = [f_0, \ldots, f_{p-1}]$ is equivalent to Fourier Multiplication Algorithm for modular subtraction (Eq.~\eqref{eqn: fma, subtraction}).
\end{theorem}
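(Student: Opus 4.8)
The plan is to prove the identity $f_\ell(x) = y_{\mathrm{sub}}(x;\ell) + h(x)$ for every input $x = x_{[1]} \oplus x_{[2]} \in \R^{2p}$, where $h$ is a function of $x$ that does not depend on the output index $\ell$ (and up to a fixed cyclic relabeling of the output coordinates). Since $h$ is common to all $\ell$, this shows that the tuple $f = [f_0,\dots,f_{p-1}]$ induces the same input--output map as the FMA. First I would put the kernel in closed form. Since the circulant $C$ with first row $\e_1$ is the cyclic shift, $C^{\ell} = \sigma^{-\ell}$ and $(C^{\ell})\tran = \sigma^{\ell}$, so the block form of $M_\ell$ gives, for a training point $\e_{a'} \oplus \e_{b'}$,
\[
x\tran M_\ell(\e_{a'}\oplus\e_{b'}) = [\sigma^{\ell}x_{[1]}]_{b'} + [\sigma^{-\ell}x_{[2]}]_{a'}, \qquad k(x,\e_{a'}\oplus\e_{b'};M_\ell) = \big([\sigma^{\ell}x_{[1]}]_{b'} + [\sigma^{-\ell}x_{[2]}]_{a'}\big)^{2}.
\]
Hence $f_\ell = k(\cdot,X;M_\ell)\alpha^{(\ell)}$ is a homogeneous quadratic in $x$, and expanding the square splits it into a pure $x_{[1]}$-term, a pure $x_{[2]}$-term, and a cross (bilinear) term weighted by $\alpha^{(\ell)}$.

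Next I would identify the target. Plugging the DFT definition into Eq.~\eqref{eqn: fma, subtraction} and collapsing the inner sum with the orthogonality identity $\sum_{k=0}^{p-1}\omega^{km} = p\,\mathbbm{1}_{\{m\equiv 0\}}$, one finds that the FMA output is the single-band circulant bilinear form $y_{\mathrm{sub}}(x;\ell) = x_{[1]}\tran \sigma^{\ell+1} x_{[2]}$, whose off-diagonal structure matches exactly that of the blocks of $M_\ell$. On one-hot inputs this recovers modular subtraction (up to the harmless index shift $\ell \mapsto \ell+1$), reproducing the computation already spelled out in the excerpt.

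The crux is constructing the coefficients. Using the cyclic equivariance of both the kernel and the labels, I would take a two-dimensional DFT ansatz for $\alpha^{(\ell)}$ over the index pair $(a',b')$ and choose it so that (i) summation against the cross term $2\sum_{a',b'}\alpha^{(\ell)}_{a'b'}[\sigma^{\ell}x_{[1]}]_{b'}[\sigma^{-\ell}x_{[2]}]_{a'}$ inverts the two DFTs and returns exactly $x_{[1]}\tran\sigma^{\ell+1}x_{[2]} = y_{\mathrm{sub}}(x;\ell)$ up to an all-ones correction proportional to $(\one\tran x_{[1]})(\one\tran x_{[2]})$, and (ii) the leftover pure-square contributions reduce, via the row and column sums of $\alpha^{(\ell)}$ (which are constant in the free index), to a fixed multiple of $\|x_{[1]}\|^2 + \|x_{[2]}\|^2$. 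All three leftover pieces are independent of $\ell$, so together they form the function $h$ above. To identify this $\alpha^{(\ell)}$ with the ridgeless solution $k(X,X;M_\ell)^{-1}y^{(\ell)}$, I would (a) check that the resulting $f_\ell$ interpolates the labels $\mathbbm{1}_{\{a'-b'\equiv\ell\}}$ on the one-hot training set --- on one-hot inputs the pure-square terms become linear and the cross term is the FMA, so this reduces to FMA-correctness plus bookkeeping on constants --- and (b) invoke uniqueness: if $k(X,X;M_\ell)$ is invertible the interpolant inside $\mathrm{span}\{k(\cdot,\e_{a'}\oplus\e_{b'};M_\ell)\}$ is unique and hence equals $k(\cdot,X;M_\ell)\alpha^{(\ell)}$, while if it is rank-deficient the same conclusion follows by checking that the constructed solution is the minimum-RKHS-norm interpolant.

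The main obstacle is exactly step three: writing down the precise ridgeless coefficients and verifying that the quadratic kernel's spurious pure-square and all-ones contributions assemble into a single $\ell$-independent function, so that the ridgeless (minimum-norm) mechanism genuinely selects the FMA rather than some competing interpolant. The remaining ingredients --- the kernel evaluation and the circulant/DFT bookkeeping (directions of the shifts, complex conjugates, the $\ell$ versus $\ell+1$ offset) --- are routine but must be carried out with care.
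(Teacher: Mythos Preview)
Your proposal is correct and follows essentially the same route as the paper: expand the quadratic kernel into a cross (bilinear) term plus pure-square terms, construct $\alpha^{(\ell)}$ explicitly, verify interpolation, and identify the bilinear part with the FMA via circulant--DFT diagonalization.

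The paper is more direct in step three. Rather than a general 2D Fourier ansatz, it writes the interpolation condition on the training set as the single matrix equation
\[
C^{-\ell}\alpha\,\mathbf{1}\mathbf{1}^\top + \mathbf{1}\mathbf{1}^\top\alpha\, C^{-\ell} + 2\,C^{-\ell}\alpha\, C^{-\ell} = C^{\ell},
\]
posits the closed form $\alpha^{(\ell)} = \tfrac12 C^{3\ell} + \lambda\,\mathbf{1}\mathbf{1}^\top$, and solves one scalar equation for $\lambda$. Your DFT ansatz would rediscover exactly this (circulants are diagonal in the Fourier basis), just with more bookkeeping. Two further remarks: (i) your observation that the pure-square and all-ones pieces assemble into an $\ell$-independent residual $h(x)$ on general inputs is correct and is actually more careful than the paper, which suppresses $h$ in its final line; (ii) the kernel matrix here is genuinely invertible---looking at the cross block of the feature map $(\e_{b+\ell}\oplus\e_{a-\ell})^{\otimes 2}$ already shows the $p^2$ representers are linearly independent---so your rank-deficient contingency is not needed.
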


As $C$ is circulant, $C^\ell$ is also circulant. Hence, each $M_\ell$ has the structure described in Observation~\ref{circulant_ansatz}, where $A=0$. Note our construction differs from RFM in that we use a different feature matrix $M_\ell$ for each output coordinate, rather than a single feature matrix across all output coordinates.  Nevertheless, Theorem~\ref{thm: kernel fma} provides support for the fact that block-circulant feature matrices can be used to solve modular arithmetic.  

We provide the proof for Theorem~\ref{thm: kernel fma} for in Appendix~\ref{sec: proofs}. The argument for the FMA for addition (Eq.~\eqref{eqn: fma, addition}) is identical provided we replace $C^{\ell}$ with $C^{\ell} R$ and $(C^{\ell})\tran$ with $(C^{\ell}R)\tran$ in each $M_\ell$, where $R$ is the Hankel matrix that reverses the row order (i.e. ones along the main anti-diagonal, zero's elsewhere), whose first row is $\e_{p-1}$. An analogous result follows for multiplication and division under re-ordering by a group element, as described in Section~\ref{sec: rfm emergence}.

Our proof uses the well-known fact that circulant matrices can be diagonalized using the DFT matrix \citep{circulant_book} (see Lemma~\ref{fact:circulant, DFT} for a restatement of this fact). This fundamental relation intuitively connects circulant features and the FMA. By using kernels with block-circulant Mahalanobis matrices, we effectively represent the one-hot encoded data in terms of their Fourier transforms. We conjecture that this implicit representation is what enables RFM to learn modular arithmetic with more general circulant matrices when training on just a fraction of the discrete data. 

Not only do neural networks and RFM learn similar features, we now have established a setting where kernel methods equipped with block-circulant feature matrices learn the same out-of-domain solution as neural networks on modular arithmetic tasks. This result is interesting, in part, as the only constraint for generalization on these tasks is to obtain perfect accuracy on inputs that are standard basis vectors. However, as such functions can be extended arbitrarily over all of $\Real^{2d}$, there are infinitely many generalizing solutions. Therefore, the particular out-of-domain solution found by training is determined by the specifics of the learning algorithm. It is intriguing that kernel-RFMs and neural networks, which are clearly quite different algorithms, are both implicitly biased toward solutions that involve block-circulant feature matrices.

\section{Discussion and Conclusions}
\label{sec: discussion}

In recent years our understanding of generalization in machine learning has undergone dramatic changes.  Most classical analyses of generalization relied on the training loss serving as a proxy for the test loss and thus a useful measure of generalization.  Empirical results of deep learning have upended this {long-standing} belief. In many settings, predictors that interpolate the data (fit the data exactly) can still generalize, thus invalidating training loss as a possible predictor  of test performance. This has led to the recent developments in understanding benign overfitting, not just in neural networks but even in classical kernel and linear models~\cite{belkin2021fit,bartlett2021deep}. 
Since the training loss may not  be a  reliable predictor for generalization, the common  
suggestion has been to use the validation loss computed  on a separate {\it validation dataset}, one that is not used in training and is ideally indistinguishable from the test set. 
This procedure is standard practice: neural network training is typically stopped once the validation set loss stops improving. 

Emergent phenomena, such as grokking, show that we cannot rely even on validation performance at intermediate training steps to predict generalization at the end of training. Indeed, validation loss at a certain iteration may not be indicative of the validation loss itself only a few iterations later. Furthermore, it is clear that, contrary to~\cite{schaeffer2023are}, these phase transitions in performance are not generally ``a mirage'' since, as we demonstrate in this work, they are not predicted by standard {\it a priori} measures of performance, continuous or discontinuous. Instead, at least in our setting, emergence is fully determined by feature learning, which is difficult to observe without having access to a fully trained generalizable model. 
Indeed, the progress measures discussed in this work, as well as those suggested before in, e.g.,~\cite{barak2022hidden, nanda2023progress,doshi2024grokkingmodularpolynomials} can be termed  {\it a posteriori} progress indicators. They all require  either non-trivial understanding of the algorithm implemented by a fully generalizable trained model (such as our circulant deviation, the Fourier gap considered in~\cite{barak2022hidden}, or the Inverse Participation Ratio in~\cite{doshi2024grokkingmodularpolynomials}) or access to such a model (such as AGOP alignment).  
\begin{wrapfigure}{r}{0.3\textwidth}
  \centering    \includegraphics[width=1\linewidth]{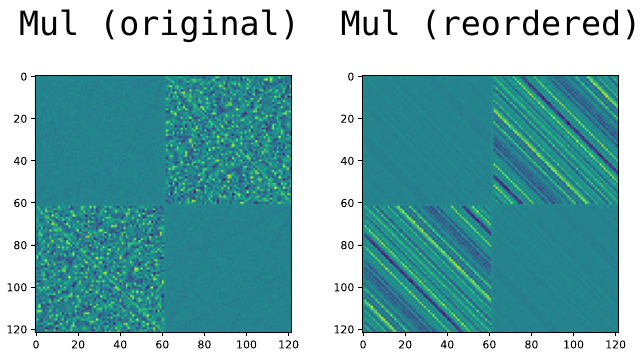}
    \caption{Square root of AGOP for quadratic RFM trained on modular multiplication with $p=61$ before reordering (left) and after reordering (right).}
    \label{fig:conclusion-reordered-vs-not-reordered}
\end{wrapfigure}
To sharpen this last point, consider generalizable features for modular multiplication shown in Fig.~\ref{fig:conclusion-reordered-vs-not-reordered} on the right. Aside from the block structure, the original features shown in the left panel of Fig.~\ref{fig:conclusion-reordered-vs-not-reordered} do not have a visually identifiable pattern.  In contrast, re-ordered features in the right panel are clearly striped and are thus immediately suggestive of block-circulants. Note that, as discussed in Section~\ref{sec: rfm emergence}, reordering of features
requires understanding that the multiplicative group $\Z_p^*$ is cyclic of order $p-1$. While it is a well-known result in group theory, it is far from obvious {\it a priori}. It is thus plausible that in other settings hidden feature structures may be hard to identify due to a lack of comparable mathematical insights. 

\paragraph{Why is learning modular arithmetic surprising?}
The task of learning modular operations  appears to be fundamentally different from many other statistical machine learning tasks.   
In continuous ML settings, we typically posit that the ``ground truth'' target function is smooth in an appropriate sense. 
Hence any general purpose algorithm capable of learning smooth functions (such as, for  example, $k$-nearest neighbors) should be able to learn the target function given enough data. 
Primary differences between learning algorithms are  thus in sample and computational efficiency. 
In contrast,  it is not clear what principle leads to learning modular arithmetic from partial observations. There are many ways to fill in the missing data and we do not know a  simple inductive bias, such as smoothness, to guide us toward a solution. Several recent works argued that margin maximization with respect to certain norms can account for learning modular arithmetic~\citep{morwani2024featureemergencemarginmaximization, lyu2023dichotomy, mohamadi2024why}. While the direction is promising,  general underlying principles have not yet been elucidated. 

\paragraph{Low rank learning.} 
The problem of  learning modular arithmetic can be viewed as a type of matrix completion -- completing the   $p\times p$ matrix (so-called Cayley table) representing modular operations, from partial observations. 
The best studied  matrix completion problem is low rank matrix completion, where the goal is to  fill in missing entries of a low rank matrix from observing a subset of the entries~\cite[Ch.8]{moitra2018algorithmic}. 
 While many specialized algorithms exist, it has been observed that neural networks can  recover low rank matrix structures~\cite{gunasekar2017implicit}. Notably, in a development paralleling the results of this paper, low-rank matrix completion can provably be performed by linear RFMs using the same AGOP mechanism~\cite{LinearRFM}.  

It is thus tempting to posit that grokking modular operations in neural networks or RFM can be  explained as a low rank prediction problem.  Indeed  modular operations can be implemented by an index $4$ model, i.e., a function of the form $f=g(Ax)$, where $x\in \R^{2p}$ and $A$ is a rank $4$ matrix (see Appendix~\ref{app:low_rank} for the construction). It is a plausible conjecture as there is strong evidence, empirical and theoretical, that neural networks are capable of learning such multi-index models~\cite{damianLowRank, IoannisLowRank} as well as low-rank matrix completion. Furthermore, a  phenomenon similar to grokking was discussed in~\cite[Fig. 5, 6]{radhakrishnan2022mechanism}  in the context of low rank feature learning for both neural networks and RFM. However, despite the existence of generalizeable low rank models, the actual circulant features learned by both Neural Networks and RFM are {\it not} low rank.  

Interestingly, this observation mirrors the problem of  learning parity functions through neural network inspired minimum norm interpolation, which was analyzed in~\cite{pmlr-v195-ardeshir23a}. While single-directional (index one) solutions exist in that setting, the authors show that the minimum norm solutions are all multi-dimensional.  

\paragraph{Emergence with respect to compute, training data size and model size.} In this paper we have primarily dealt with emergence 
at some point in the course of the training process. In that setting (e.g., Fig.~\ref{fig:intro_fig_p59}) we can consider the $x$ axis as measurement of  the amount of compute, analogous to the number of epochs in training neural networks.
The dependence of model quality on compute is a question of key practical importance as every training process has to eventually stop. What if just one more iteration endowed the model with new amazing skills?  
As we have shown, there is no simply identifiable progress measure without access to the trained model or some insight into the algorithm it implements. 
The evidence is more mixed for emergence  with respect to the training data. As we show in Appendix Fig.~\ref{fig:emerge_train_data_frac}, while the test accuracy improves fairly sharply at about $25\%$ training fraction threshold, the  test loss exhibits gradual improvement as a function of the training data with no obvious phase transitions. Thus, unlike the emergence with respect to compute,  emergence with respect to the training data size may be a ``mirage'' in the sense of~\cite{schaeffer2023are}.  As far as the model size is concerned, we note that  kernel machines can be thought of as neural networks of infinite width. We are thus not able to analyze emergence with respect to the model size in these experiments. 

\paragraph{Analyses of grokking.}
Recent works~\citep{kumar2024grokking, lyu2023dichotomy, mohamadi2024why} argue that grokking occurs in neural networks through a two phase mechanism that transitions from a ``lazy'' regime, with no feature learning, to a ``rich''  feature learning regime. Our experiments clearly show that grokking in RFM does not undergo such a transition. For RFM on modular arithmetic tasks, our progress measures indicate that the features evolve  gradually  toward the final circulant matrices,  even as test performance initially remains constant (Fig.~\ref{fig:p61_kernel_curves}). Grokking in these settings is entirely due to the gradual feature quality improvement and two-phase grokking does not occur. Therefore,  two-phase  cannot serve as  a general explanation of grokking with respect to the compute.
Additionally, we have not observed significant evidence of ``lazy'' to ``rich'' transition as a mechanism for grokking in our experiments with neural networks, as most of our measures of feature learning start improving early on in the training process (improvement in circulant deviation measure is delayed for addition and subtraction, but not for multiplication and division, while AGOP feature alignment is initially nearly linear for all tasks), see Fig.~\ref{fig:nn_progress_measures}. {These observations for neural networks are in line with the results in \citep{doshi2024grokkingmodularpolynomials, nanda2023progress}, where their proposed progress measures, Inverse Participation Ratio and Gini coefficients of the weights in the Fourier domain, are shown to increase prior to improvements in test loss and accuracy for modular multiplication and addition}.

Furthermore, as grokking modular arithmetic occurs in a kernel model equipped with a linear feature learning mechanism, a general explanation for grokking cannot depend on mechanisms that are specific to neural networks. Therefore, explanations for grokking that depend on the magnitude of the weights or neural circuit efficiency (e.g., \citep{nanda2023progress, varma2023explaininggrokkingcircuitefficiency}) or other attributes of neural networks, such as specific optimization methods, cannot account for the phenomena described in our work. 

\paragraph{Conclusions.} 
In this paper, we showed that grokking modular arithmetic happens in feature learning kernel machines in a manner very similar to what has been observed in neural networks. Perhaps the most unexpected aspect of our findings is that feature learning can happen independently of improvements in both training and test loss. {Note that this is hard to observe in the context of neural networks as non-zero training loss is required  for the training process.} Not only does this finding reinforce the narrative of rapid emergence of skills in neural networks, it is also  not easily explicable within the framework of the existing generalization theory. 

Finally, this work  adds to the growing body of evidence that the AGOP-based mechanisms of feature learning  can account for some of the most interesting  phenomena in deep learning. These include
generalization with multi-index models~\citep{ParkinsonEGOP}, deep neural collapse~\citep{agopNC}, and the ability to perform low-rank matrix completion~\citep{LinearRFM}. Thus, RFM provides a framework that is both practically powerful and serves as a theoretically tractable model of deep learning. 

\section*{Acknowledgements}

We acknowledge support from the National Science Foundation (NSF) and the Simons Foundation for the Collaboration on the Theoretical Foundations of Deep Learning through awards DMS-2031883 and \#814639 as well as the  TILOS institute (NSF CCF-2112665). This work used the programs (1) XSEDE (Extreme science and engineering discovery environment)  which is supported by NSF grant numbers ACI-1548562, and (2) ACCESS (Advanced cyberinfrastructure coordination ecosystem: services \& support) which is supported by NSF grants numbers \#2138259, \#2138286, \#2138307, \#2137603, and \#2138296. Specifically, we used the resources from SDSC Expanse GPU compute nodes, and NCSA Delta system, via allocations TG-CIS220009.
NM and AR gratefully acknowledge funding and support for this research from the Eric and Wendy
Schmidt Center at the Broad Institute of MIT and Harvard.
The authors would like to  thank Jonathan Xue for assistance in the initial phase of the project, Daniel Hsu for insightful comments and references, and Darshil Doshi and Tianyu He for useful discussion on related literature.
NM would additionally like to thank Sarah Heller for editing suggestions.

\bibliographystyle{abbrv}
\bibliography{aux/refs} 

\newpage
\appendix
\captionsetup[figure]{labelformat=default, labelsep=colon, name=Appendix Figure }
\setcounter{figure}{0}  

\section{Neural Feature Ansatz}
\label{app: nfa}

While the NFA has been observed generally across depths and architecture types \citep{rfm_science, convRFM, BeagleholeNFA}, we restate this observation for fully-connected networks with one hidden-layer of the form in Eq.~\eqref{eq: neural net}.

\begin{ansatz}[Neural Feature Ansatz for one hidden layer]
For a one hidden-layer neural network $\fnn$ and a matrix power $\alpha \in (0,1]$, the following holds:
\begin{align}
\label{eq: nfa}
    W_1\tran W_1 \propto \AGOPnn(\fnn)^s~.
\end{align}
\end{ansatz}
Note that this statement implies that $W_1\tran W_1$ and $\AGOPnn(\fnn)^s$ have a cosine similarity of $\pm1$.

In this work, we choose $\alpha = \frac{1}{2}$, following the main results in \cite{rfm_science}. While the absolute value of the cosine similarity is written in Eq.~\eqref{eq: nfa} to be $1$, it is typically a high value less than $1$, where the exact value depends on choices of initialization, architecture, dataset, and training procedure. For more understanding of these conditions, see~\cite{BeagleholeNFA}. \section{Model and training details}
\label{apdx:model_training_details}

\paragraph{Gaussian kernel:}
Throughout this work we take bandwidth $L = 2.5$ when using the Mahalanobis Gaussian kernel.
We solve ridgeless kernel regression using NumPy on a standard CPU.

\paragraph{Neural networks:}
Unless otherwise specified, we train one hidden layer neural networks with quadratic activation functions and no biases in PyTorch on a single A100 GPU.
Models are trained using AdamW with hidden width $1024$, batch size $32$, learning rate of $10^{-3}$, weight decay $1.0$, and standard PyTorch initialization.
All models are trained using the Mean Squared Error loss function (square loss).

For the experiments in Appendix Fig.~\ref{fig:agop_decay}, we train one hidden layer neural networks with quadratic activation and no biases on modular addition modulo $p = 61.$
We use 40\% training fraction, PyTorch standard initialization, hidden width of $512$, weight decay $10^{-5}$, and AGOP regularizer weight $10^{-3}$.
Models are trained with vanilla SGD, batch size $128$, and learning rate $1.0$.

 \section{Reordering feature matrices by group generators}
\label{app: reordering by generator}

Our reordering procedure uses the standard fact of group theory that the multiplicative group $\mathbb{Z}_p^*$ is a cyclic group of order $p-1$~\cite{koblitz1994course}. By definition of the cyclic group, there exists at least one element $g \in \mathbb{Z}_p^*$, known as a \textit{generator}, such that $\mathbb{Z}_p^* = \{g^i ~;~ i \in \{1, \ldots, p-1\} \}$.

Given a generator $g \in \mathbb{Z}_p^*$, we reorder features according to the map, $\phi_g: \mathbb{Z}_p^* \to \mathbb{Z}_p^*$, where if $h = g^{i}$, then $\phi_g(h) = i$.  In particular, given a matrix $B \in \mathbb{R}^{p \times p}$, we reorder the bottom right $(p - 1) \times (p -1) $ sub-block of $B$ as follows: we move the entry in coordinate $(r, c)$ with $r, c \in \mathbb{Z}_p^*$ to coordinate $(\phi_g(r), \phi_g(c))$. For example if $g = 2$ in $\mathbb{Z}_5^*$, then $(2, 3)$ entry of the sub-block would be moved to coordinate $(1, 3)$ since $2^1 = 2$ and $2^3 \mod 5 = 3$. In the setting of modular multiplication/division, the map $\phi_g$ defined above is known as the \textit{discrete logarithm} base $g$~\cite[Ch.3]{koblitz1994course}.  The discrete logarithm is analogous to the logarithm defined for positive real numbers in the sense that it converts modular multiplication/division into modular addition/subtraction.  Lastly, in this setting, we note that we only reorder the bottom $(p-1) \times (p-1)$ sub-block of $B$ as the first row and column are $0$ (as multiplication by $0$ results in $0$).

Upon re-ordering the $p \times p$ off-diagonal sub-blocks of the feature matrix by the map $\phi_g$, the feature matrix of RFM for multiplication/division tasks contains circulant blocks as shown in Fig.~\ref{fig:p61_kernel_agops}C.  Thus, the reordered feature matrices for these tasks also exhibit the structure in Observation~\ref{circulant_ansatz}. As a remark, we note that there can exist several generators for a cyclic group, and thus far, we have not specified the generator $g$ we use for re-ordering.  For example, $2$ and  $3$ are both generators of $\mathbb{Z}_5^*$ since $\{2, 2^2, (2^3 \mod 5), (2^4 \mod 5)\} = \{3, (3^2 \mod 5), (3^3 \mod 5), (3^4 \mod 5)\} = \mathbb{Z}_5^*$.   Lemma~\ref{lemma: circulant all or none} implies that the choice of generator does not matter for observing circulant structure.  As a convention, we simply reorder by the smallest generator.
 \section{Enforcing circulant structure in RFM}
\label{app: enforcing circulant}

We see that the structure in Observation~\ref{circulant_ansatz} gives generalizing features on modular arithmetic when the circulant $C$ is constructed from the RFM matrix. 
We observe that enforcing this structure at every iteration, and comparing to the standard RFM model at that iteration, improves test loss and accelerates grokking on e.g. addition (Appendix Fig.~\ref{fig:verify_obs1}). 
The exact procedure to enforce this structure is as follows. We first perform standard RFM to generate feature matrices $M_1, \ldots, M_T$. 
Then for each iteration of the standard RFM, we construct a new $\wt{M}_t$ on which we solve ridgeless kernel regression for a new $\alpha$ and evaluate on the test set. 
To construct $\wt{M}$, we take $D = \diag{M_t}$ and first let $\wt{M} = D^{-1/2} M D^{-1/2}$, to ensure the rows and columns have equal scale. 
We then reset the top left and bottom right sub-matrices of $\wt{M}$ as $I - \frac{1}{p} \mathbf{1}\mathbf{1}^T$, and replace the bottom-left and top-right blocks with $C$ and $C\tran$, where $C$ is an exactly circulant matrix constructed from $M_t$. 
Specifically, where $\c$ is the first column of the bottom-left sub-matrix of $M_t$, column $\ell$ of $C$ is equal to $\sigma^\ell(M_t)$.  \section{Grokking multiple tasks}
\label{app: multitask grokking}

We train RFM to simultaneously solve the following two modular polynomial tasks: (1) $x + y \mod p$ ; (2) $x^2 + y^2 \mod p$ for modulus $p = 61$.
We train RFM with the Mahalanobis Gaussian kernel using bandwidth parameter $L = 2.5$.
Training data for both tasks is constructed from the same 80\% training fraction.
In addition to concatenating the one-hot encodings for $x, y$, we also append an extra bit indicating which task to solve ($0$ indicating task (1) and $1$ indicating task (2)). 
The classification head is shared for both tasks (e.g. output dimension is still $\R^p$).

In Appendix Fig.~\ref{fig:multitask}, we observe that there are two sharp transitions in the test loss and test accuracy.  By decomposing the loss into the loss per task, we observe that RFM groks task (1) prior to grokking task (2).  Overall, these results illustrate that grokking of different tasks can occur at different training iterations.   

 \section{FMA example for $p = 3$}
\label{app: FMA example}

We now provide an example of the FMA for $p = 3$.  Let $x = \e_1 \oplus \e_2$.  In this case, we expect the FMA to output the vector $\e_{0}$ since $(1 + 2) \mod 3 = 0$.  Following the first step of the FMA, we compute  
\begin{align}
    \hat{x}_{[1]} = F\e_1 = \frac{1}{\sqrt{3}}[1,\omega,\omega^2]\tran ~~;~~ \hat{x}_{[2]} = F\e_2 = \frac{1}{\sqrt{3}} [1,\omega^2,\omega^4]\tran ~,
\end{align}
which are the first and second columns of $F$, respectively. Then their element-wise product is given by 
\begin{align}
    F\e_1 \odot F\e_2 = \frac{1}{3}[1, \omega^3, \omega^6]\tran = \frac{1}{3}[1,1,1]\tran = \frac{1}{\sqrt{3}} F\e_{0}~,
\end{align}
which is $\frac{1}{\sqrt{3}}$ times the first column of the DFT matrix.  Finally, we compute the outputs $\sqrt{3} \inner{\frac{1}{\sqrt{3}}F\e_{0}, F\e_{\ell}}_\C$ for each $\ell \in \{0,1,2\}$. As $F$ is unitary, $y_{\mathrm{add}}(\e_1 \oplus \e_2;\ell) = \mathbbm{1}_{\curly{1+2 = \ell \mod 3}}$, so that coordinate $0$ of the output will have value $1$, and all other coordinates have value $0$.  
 \section{Additional results and proofs}
\label{sec: proofs}

\begin{lemma}
\label{lemma: circulant all or none}
    Let $C \in \Real^{p \times p}$ with its first row and column entries all equal to $0$. Let the $(p-1)\times(p-1)$ sub-block starting at the second row and column be $C^\times$. Then, $C^\times$ is either circulant after re-ordering by any generator $q$ of $\Z_p^*$, or $C^\times$ is not circulant under re-ordering by any such generator.
\end{lemma}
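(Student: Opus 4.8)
The argument rests on the fact that any two generators of the cyclic group $\Zp^*$ differ by an automorphism which acts on exponents as multiplication by a unit modulo $p-1$, and that such ``rescaling'' permutations preserve circulant (and Hankel) structure. First I would fix the bookkeeping. Identify the index set $\{1,\dots,p-1\}$ of the sub-block $C^\times$ with $\Z/(p-1)\Z$ via a fixed reference generator $g$: position $r = g^i$ of the original block corresponds to index $i$ after re-ordering by $\phi_g$. Writing $R_g$ for the re-ordered matrix, this gives $[R_g]_{ij} = C^\times_{g^i,\,g^j}$ for all $i,j$. By the definitions in Section~\ref{sec: preliminaries}, $R_g$ is circulant precisely when $[R_g]_{ij}$ depends only on $i-j \bmod (p-1)$, and Hankel precisely when it depends only on $i+j \bmod (p-1)$.

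Next I would relate two re-orderings. Let $q$ be any other generator of $\Zp^*$. Since $\Zp^*$ is cyclic of order $p-1$ and $g$ generates it, we may write $q = g^k$ for some $k$; moreover $g^k$ is itself a generator if and only if $\gcd(k,p-1)=1$, so $k$ is a unit modulo $p-1$. Then
\begin{align*}
[R_q]_{ij} = C^\times_{q^i,\,q^j} = C^\times_{g^{ki},\,g^{kj}} = [R_g]_{\,ki \bmod (p-1),\ kj \bmod (p-1)}~,
\end{align*}
i.e.\ $R_q$ is obtained from $R_g$ by applying the permutation $\pi_k : i \mapsto ki \bmod (p-1)$ simultaneously to rows and columns. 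Because $k$ is a unit, $\pi_k$ is a genuine bijection of $\Z/(p-1)\Z$.

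Finally I would verify that $\pi_k$ preserves the structure. If $R_g$ is circulant, then $[R_g]_{ab} = h\big(a-b \bmod (p-1)\big)$ for some function $h$, whence $[R_q]_{ij} = h\big(k(i-j) \bmod (p-1)\big)$, which is still a function of $i-j \bmod (p-1)$; thus $R_q$ is circulant. The Hankel case is identical with $a-b$ replaced by $a+b$. Since $g$ was an arbitrary fixed generator and $q$ an arbitrary generator, this shows that if $C^\times$ is circulant (resp.\ Hankel) after re-ordering by one generator, it is so after re-ordering by every generator, and contrapositively if it fails for one it fails for all --- exactly the claim of the lemma. I expect the only mild obstacle to be keeping the discrete-logarithm identification and the conventions of Section~\ref{sec: preliminaries} straight (the wrap-around modulus is $p-1$ rather than $p$, and one must check the direction of the difference $i-j$ versus $j-i$); the group theory itself is elementary.
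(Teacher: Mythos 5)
Your proof is correct and takes essentially the same approach as the paper's: both express one generator as a power of another ($q = g^k$) and then use the shift-invariance of the diagonals under the first re-ordering to deduce shift-invariance under the second (the paper chains the equalities $C_{q^i,q^j}=C_{g^{ki},g^{kj}}=C_{g^{k(i+a)},g^{k(j+a)}}=C_{q^{i+a},q^{j+a}}$ directly, while you phrase it via a function $h$ of $i-j$ and the index-scaling map $\pi_k$). Your explicit remark that $k$ must be a unit modulo $p-1$ is a nice touch but is not strictly needed for the forward implication — the chained equalities go through for any $k$ with $q=g^k$, which is the route the paper takes.
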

\begin{proof}[Proof of Lemma~\ref{lemma: circulant all or none}]
We prove the lemma by showing that for any two generators $q_1, q_2$ of $\Z_p^*$, if $C^\times$ is circulant re-ordering with $q_1$, then it is also circulant when re-ordering by $q_2$.

Suppose $C^\times$ is circulant re-ordering with $q_1$. Let $i,j \in \{1,\ldots,p-1\}$. Note that by the circulant assumption, for all $s \in \Z$,
\begin{align}
    C_{q_1^i,q_1^j} = C_{q_1^{i+s},q_1^{i+s}}~,
\end{align}
where we take each index modulo $p$.

As $q_2$ is a generator for $\Zp^*$, we can access all entries of $C^\times$ by indexing with powers of $q_2$. Further, as $q_1$ is a generator, we can write $q_2 = q_1^k$, for some power $k$. Let $a \in \Z$. Then,
\begin{align*}
    C_{q_2^i, q_2^j} &= C_{q_1^{ki}, q_1^{kj}}\\
    &= C_{q_1^{ki + ka}, q_1^{kj + ka}}\\
    &= C_{q_1^{k(i + a)}, q_1^{k(j + a)}}\\
    &= C_{q_2^{i + a}, q_2^{j + a}}~.
\end{align*}
Therefore, $C$ is constant on the diagonals under re-ordering by $q_2$, concluding the proof.
\end{proof}

We next state Lemma~\ref{fact:circulant, DFT}, which is used in the proof of Theorem~\ref{thm: kernel fma}.

\begin{lemma}[See, e.g., \cite{circulant_book}]
\label{fact:circulant, DFT}
Circulant matrices $U$ can be written (diagonalized) as:
\begin{align*}
    U = F D \herm{F}~,
\end{align*}
where $F$ is the DFT matrix, $\herm{F}$ is the element-wise complex conjugate of $F\tran$ (i.e. the Hermitian of $F$), and $D$ is a diagonal matrix with diagonal $\sqrt{p} \cdot Fu$, where $u$ is the first row of $U$.
\end{lemma}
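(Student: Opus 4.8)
The plan is to prove the diagonalization by exhibiting the columns of $F$ explicitly as the eigenvectors of $U$ and then reading off the corresponding eigenvalues. Write $u = [u_0,\ldots,u_{p-1}]$ for the first row of $U$; by the circulant convention of Section~\ref{sec: preliminaries}, the $(\ell,j)$ entry of $U$ is $U_{\ell j} = u_{j-\ell \bmod p}$. First I would fix $k \in \{0,\ldots,p-1\}$ and consider the vector $v^{(k)}$ with $[v^{(k)}]_j = \omega^{jk}$, which is precisely $\sqrt{p}$ times the $k$-th column of $F$. The only computation needed is to expand the $\ell$-th coordinate of $U v^{(k)}$, substitute $m = j - \ell \bmod p$, and use $\omega^{p}=1$ to pull the factor $\omega^{\ell k}$ out of the sum:
\begin{align*}
  [U v^{(k)}]_\ell = \sum_{j=0}^{p-1} u_{j-\ell \bmod p}\,\omega^{jk} = \sum_{m=0}^{p-1} u_m\,\omega^{(m+\ell)k} = \Big(\sum_{m=0}^{p-1} u_m \omega^{mk}\Big)\omega^{\ell k} = \lambda_k\,[v^{(k)}]_\ell,
\end{align*}
where $\lambda_k := \sum_{m=0}^{p-1} u_m \omega^{mk}$. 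Thus each column of $F$ is an eigenvector of $U$, which collected together reads $UF = FD$ with $D = \mathrm{diag}(\lambda_0,\ldots,\lambda_{p-1})$.

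The second step is to identify $D$ with the matrix claimed in the statement. From the definition $F_{km} = \tfrac{1}{\sqrt p}\omega^{km}$ one reads off directly that $(Fu)_k = \tfrac{1}{\sqrt p}\sum_m \omega^{km} u_m = \tfrac{1}{\sqrt p}\lambda_k$, hence $\lambda_k = \sqrt p\,(Fu)_k$ and $D = \mathrm{diag}(\sqrt p\, Fu)$, exactly as stated. Finally, since $F$ is unitary (Section~\ref{sec: preliminaries}), $\herm{F} = F^{-1}$, and right-multiplying $UF = FD$ by $\herm{F}$ yields $U = F D \herm{F}$, completing the argument.

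There is no genuine analytic obstacle here — this is the classical fact that all circulants are simultaneously diagonalized by the DFT — so the ``hard part'' is purely bookkeeping: one must verify that $m \mapsto (m+\ell) \bmod p$ is a bijection of $\{0,\ldots,p-1\}$ so the index substitution is legitimate, and one must keep the paper's specific conventions consistent throughout (the sign $\omega = \exp(-2\pi i/p)$, the shift direction defining $\sigma$, and the appearance of the first \emph{row} rather than the first column of $U$ inside $D$). I would also remark for completeness that the lemma as stated concerns genuine circulants only: a Hankel matrix equals a circulant composed with the row-reversal permutation, so it is diagonalized by $F$ only up to that extra unitary factor, which is why the statement singles out circulant $U$.
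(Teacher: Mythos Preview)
Your argument is correct and is the standard proof that the DFT simultaneously diagonalizes all circulants; the index substitution and the identification $\lambda_k = \sqrt{p}\,(Fu)_k$ are both checked against the paper's conventions for $\sigma$, $\omega$, and $F$. Note, however, that the paper does not supply its own proof of this lemma: it is stated with a citation and used as a black box in the proof of Theorem~\ref{thm: kernel fma}, so there is nothing to compare against beyond confirming that your conventions match theirs --- which they do.
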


We now present the proof of Theorem~\ref{thm: kernel fma}, restating the theorem below for the reader's convenience. 

\begin{theorem*}
Given all of the discrete data $\curly{\round{\e_a \oplus \e_b, \e_{(a-b) \mod p}}}_{a,b=0}^{p-1}$, for each output class $\ell \in \{0,\cdots,p-1\}$, suppose we train a separate kernel predictor 
$ f_\ell(x) = k(x,X;M_\ell)\alpha^{(\ell)}$ where $k(\cdot;\cdot;M_\ell)$ is a quadratic kernel with $
M_\ell = \begin{pmatrix}
0 & C^{\ell}\\
(C^\ell)\tran & 0
\end{pmatrix}$~
and $C \in \Real^{p \times p}$ is a circulant matrix with first row $\e_1$. When $\alpha^{(\ell)}$ is the solution to kernel ridgeless regression for each $\ell$, the kernel predictor $f = [f_0, \ldots, f_{p-1}]$ is equivalent to Fourier Multiplication Algorithm for modular subtraction (Eq.~\eqref{eqn: fma, subtraction}).
\end{theorem*}

\begin{proof}[Proof of Theorem~\ref{thm: kernel fma}]
We present the proof for modular subtraction as the proof for addition follows analogously.  We write the standard kernel predictor for class $\ell$ on input $x = x_{[1]} \oplus x_{[2]} \in \Real^{2p}$ as,
\begin{align*}
    f_\ell(x) = \sum_{a,b=0}^{p-1} \alpha^{(\ell)}_{a,b} k\round{x, \e_a \oplus \e_b; M_\ell}~,
\end{align*}
where we have re-written the index into kernel coefficients for class $\ell$, $\alpha^{(\ell)} \in \Real^{p \times p}$, so that the coefficients are multi-indexed by the first and second digit. Specifically, now $\alpha^{(\ell)}_{a,b}$ is the kernel coefficient corresponding to the representer $k(\cdot, x)$ for input point $x = \e_a \oplus \e_b$. Recall we use a quadratic kernel, $k(x,z;M_\ell) = (x\tran M_\ell z)^2$. In this case, the kernel predictor simplifies to,
\begin{align*}
    f_\ell(x) = \sum_{a,b=0}^{p-1} \alpha^{(\ell)}_{a,b} \round{x_{[1]}\tran C^{\ell} \e_b + \e_a\tran C^{\ell} x_{[2]}}^2~.
\end{align*}
Then, the labels for each pair of input digits, written as a matrix $Y^{(\ell)} \in \Real^{p \times p}$ for the $\ell$-th class where the row and column index the first and second digit respectively, are $Y^{(\ell)} = C^{-\ell}$.  

For $x = \e_{a'} \oplus \e_{b'}$, i.e. $x$ in the discrete dataset, we have,
\begin{align*}
    f_\ell(x) &= \sum_{a,b=0}^{p-1} \alpha^{(\ell)}_{a,b} \round{\delta_{(a,b'-\ell)} + \delta_{(a',b-\ell)} + 2\delta_{(a,b'-\ell)}\delta_{(a',b-\ell)}}\\
    &= \e_{b'-\ell}\tran \alpha^{(\ell)} \one + \one\tran \alpha^{(\ell)} \e_{a' + \ell} + 2 \e_{b'-\ell}\tran \alpha^{(\ell)} \e_{a' + \ell}\\
    &= \e_{b'}\tran C^{-\ell} \alpha^{(\ell)} \one +  \one\tran \alpha^{(\ell)} C^{-\ell} \e_{a'} + 2 \e_{b'}\tran C^{-\ell} \alpha^{(\ell)} C^{-\ell} \e_{a'}\\
    &=\e_{b'}\tran\round{C^{-\ell} \alpha \one\one\tran + \one\one\tran \alpha C^{-\ell} + 2 C^{-\ell} \alpha C^{-\ell}}\e_{a'} ~,
\end{align*}
where $\delta_{(u,v)} = \mathbbm{1}_{\curly{u = v}}$. Let $f_\ell(X) \in \Real^{p \times p}$ be the matrix of function values of $f_\ell$, where $[f_\ell(X)]_{a,b} = f_\ell(\e_a \oplus \e_b)$, and, therefore, $f_\ell(\e_a \oplus \e_b) = \e_a\tran f_\ell(X) \e_b$. Then, to solve for $\alpha^{(\ell)}$, we need to solve the system of equations for $\alpha$,
\begin{align*}
    &f_\ell(X) = \round{C^{-\ell} \alpha \one\one\tran + \one\one\tran \alpha C^{-\ell} + 2 C^{-\ell} \alpha C^{-\ell}}\tran = C^{-\ell}\\
    &\iff C^{-\ell} \alpha \one\one\tran + \one\one\tran \alpha C^{-\ell} + 2 C^{-\ell} \alpha C^{-\ell} = C^{\ell}
\end{align*}
Note, by left-multiplying both sides by $C^{-\ell}$, we see this equation holds iff,
\begin{align*}
    C^{-2\ell} \alpha \one\one\tran + \one\one\tran \alpha C^{-\ell} + 2 C^{-2\ell} \alpha C^{-\ell} = I~.
\end{align*}
Note the solution is unique as the kernel matrix is full rank. We posit the solution $\alpha$ such that $C^{-2\ell} \alpha C^{-\ell} = \frac{1}{2}I + \lambda \one\one\tran$, which is $\alpha = \frac{1}{2}C^{3\ell} + \lambda \one\one\tran$. Then, solving for $\lambda$, we require,
\begin{align*}
    \one\one\tran + 2 p\lambda \one\one\tran + 2\lambda \one\one\tran = 0~,
\end{align*}
which implies $\lambda = -\frac{2}{2p + 2}$. Substituting this value of $\lambda$ and simplifying, we see finally that $f_\ell(x) = x_{[1]}\tran C^{-\ell} x_{[2]}$. Therefore, using that circulant matrices are diagonalized by $C = \sqrt{p} FD\herm{F}$ (Lemma~\ref{fact:circulant, DFT}) and $\herm{F} F = I$, where $D = \diag{F\e_1}$, we derive,
\begin{align*}
     f_\ell(x) &= \sqrt{p} \cdot x_{[1]}\tran F D^{-\ell} \herm{F} x_{[2]}\\
     &= \sqrt{p} \cdot x_{[1]}\tran F \diag{F\e_{p-\ell-1}} \herm{F} x_{[2]}\\
     &= \sqrt{p} \cdot \inner{Fx_{[1]} \odot F\e_{p-\ell-1}, F x_{[2]}}_{\C}
\end{align*}
which is the output of the FMA on modular subtraction.
\end{proof}

 \section{Low rank solution to modular arithmetic}
\label{app:low_rank}

\paragraph{Addition} We present a solution to the modular addition task whose AGOP is low rank, in contrast to the full rank AGOP recovered by RFM and neural networks. 

We define the ``encoding'' map $\Phi: \R^p \to \C$ as follows. For a vector $\a = [a_0, \ldots, a_{p-1}]$,
$$
\Phi(\a)= \sum_{k=0}^{p-1} a_k \exp\round{\frac{k2\pi i}{p}}~.
$$
Notice that $\Phi$ is a linear map such that $\Phi(\e_k)=\exp\round{\frac{k2\pi i}{p}}$. Notice also that $\Phi$ is partially invertible with the ``decoding'' map $\Psi:\C \to \R^p$.
$$
\Psi(z) = \widetilde{\mathrm{max}}\round{\inner{z, \exp\round{\frac{0\cdot2\pi i}{p}}},\ldots \inner{z,\exp\round{\frac{(p-1)\cdot2\pi i}{p}}}}~.
$$ 
Above $\widetilde{\mathrm{max}}$ is a function that makes all entries zero except for the largest one and the inner product is the usual inner product in $\C$ considered as $\R^2$. Thus 
\begin{equation}
\label{eq:extension_to_2p}
\Psi\round{\exp\round{\frac{k\cdot2\pi i}{p}}}  = \e_k~.
\end{equation}
$\Psi$ is a nonlinear map $\C \to \R^p$.
While it is discontinuous but can easily be  modified to make it differentiable. 

By slight abuse of notation, we will  define $\Phi:\R^{p} \times \R^p\to\C^2$ on pairs:
$$
\Phi(\e_j,\e_k) = (\Phi(\e_j), \Phi (\e_k))~.
$$
This is still a linear map but now to $\C^2$. 

Consider now a quadratic map $\M$ on 
$\C^2\to \C$ given by complex multiplication:
$$
\M(z_1, z_2) = z_1z_2~.
$$

It is clear that the composition $\Psi\M\Phi$ implements modular addition
$$
\Psi\M\Phi(\e_j,\e_k) = \e_{(j+k)\mod p}
$$
Furthermore, since $\Phi$ is a liner map to a four-dimensional space, the AGOP of the composition $\Psi\M\Phi$ is of rank $4$. 

\paragraph{Multiplication} The construction is for multiplication is very similar with  modifications which we sketch below. We first re-order the non-zero coordinates by the discrete logarithm with base equal to a generator of the multiplicative group $\e_g$ (see Appendix~\ref{app: reordering by generator}), while keeping the order of index $0$. Then, we modify $\Phi$ to remove index $a_0$ from the sum for inputs $\a$. 
Thus for multiplication,
$$
\Phi(\a)= \sum_{k=1}^{p-1} a_k \exp\round{\frac{k\cdot 2\pi i}{p-1}}~,
$$
Hence that $\Phi(\e_0) = 0$, $\Phi(\e_g) = \exp\left(\frac{2\pi i}{p-1}\right)$ and $\Phi(\e_{g^k}) = \exp\left(\frac{k\cdot 2\pi i}{p-1}\right)$. We  extend $\Phi$ to $\R^p\times\R^p$ as in Eq.~\ref{eq:extension_to_2p} above. Note that $\Phi$ and the re-ordering together are still a linear map of rank $4$. 

Then, the ``decoding'' map, $\Psi(z)$, will be modified to return $0$, when $z=0$, and otherwise, 
$$
\Psi(z) = g^{\widetilde{\mathrm{max}}\round{\inner{z, \exp\round{\frac{0\cdot2\pi i}{p-1}}},\ldots \inner{z,\exp\round{\frac{(p-2)\cdot2\pi i}{p-1}}}}}~.
$$ 
$\M$ is still defined as above.
It is easy to check that the composition of $\Psi\M\Phi$ with reordering implements modular multiplication modulo $p$ and furthermore, the AGOP will also be of rank $4$.
 \clearpage

\begin{figure}
  \centering
    \includegraphics[width=0.75\linewidth]{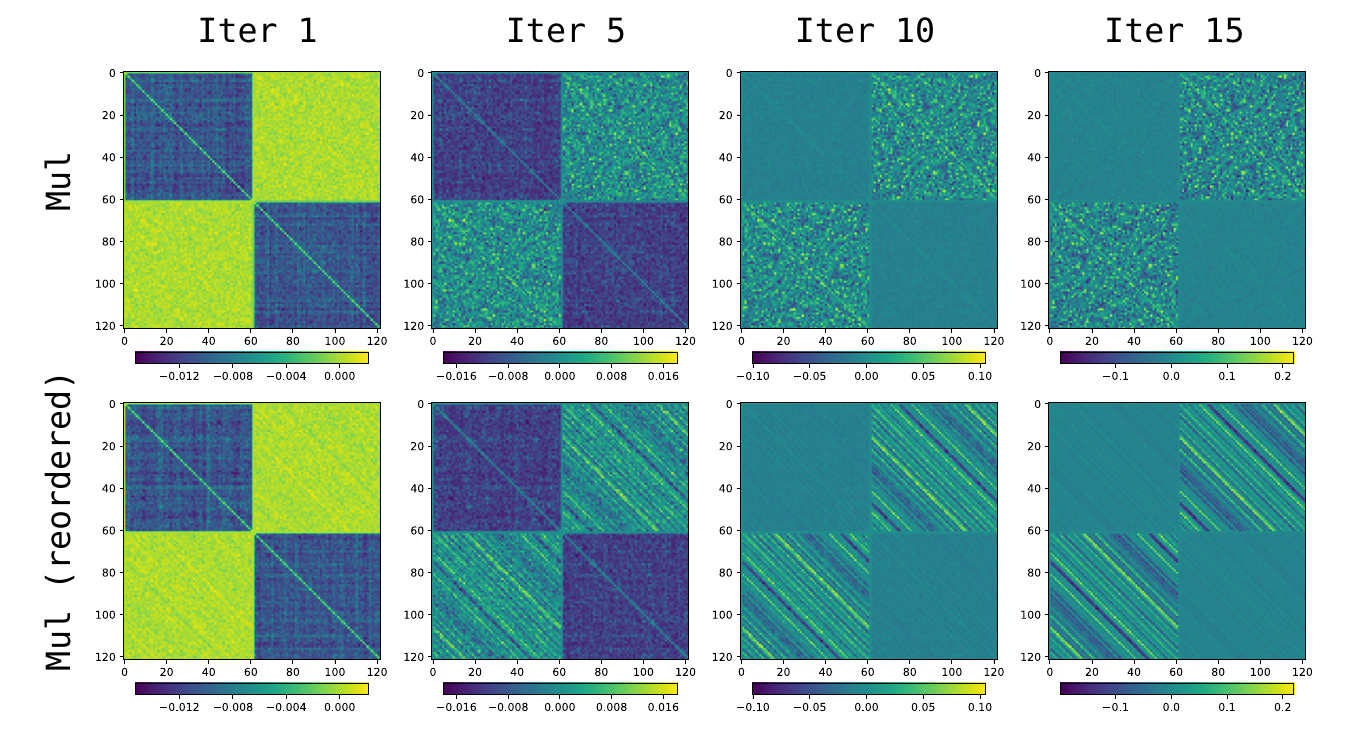}
    \caption{AGOP evolution for quadratic RFM trained on modular multiplication with $p=61$ before reordering (top row) and after reordering by the logarithm base $2$ (bottom row).}
    \label{fig:reordered-vs-not-reordered}
\end{figure}

\begin{figure}[b!]
    \centering
    \includegraphics[width=0.6\textwidth]{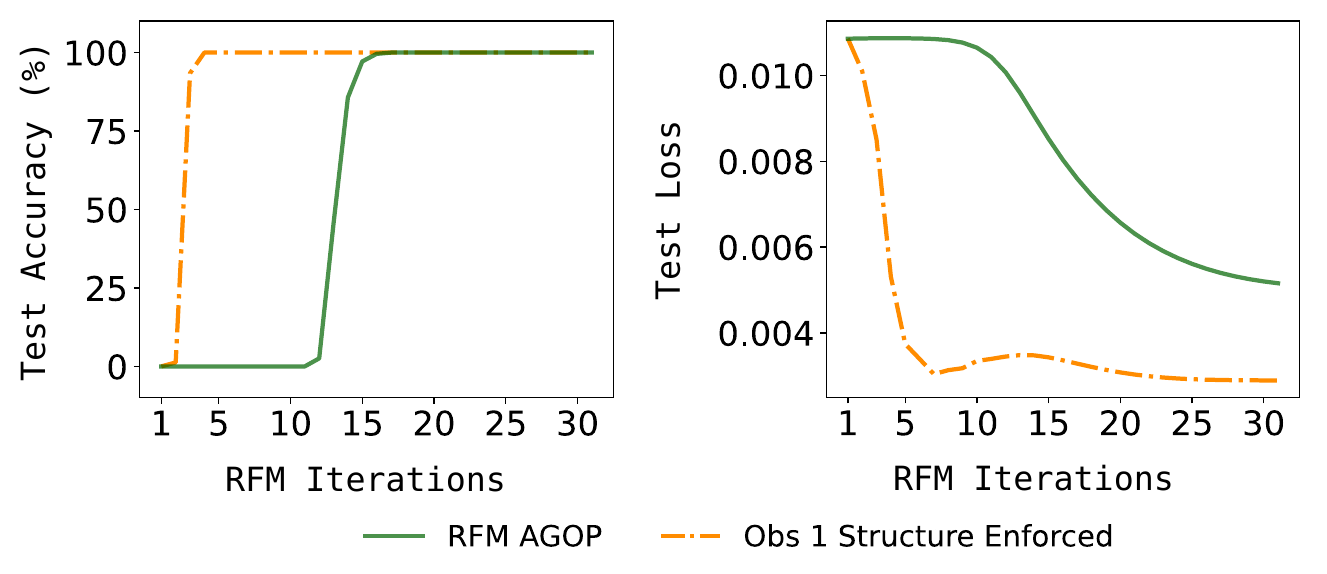}
    \caption{We train a Gaussian kernel-RFM on $x + y \mod 97$ and plot test loss and accuracy versus RFM iterations. We also evaluate the performance of the same model upon modifying the $M$ matrix to have exact block-circulant structure stated in Observation~\ref{circulant_ansatz}.}
    \label{fig:verify_obs1}
\end{figure}

\clearpage
\newpage

\begin{figure}[h!]
    \centering
    \includegraphics[width=1.0\textwidth]{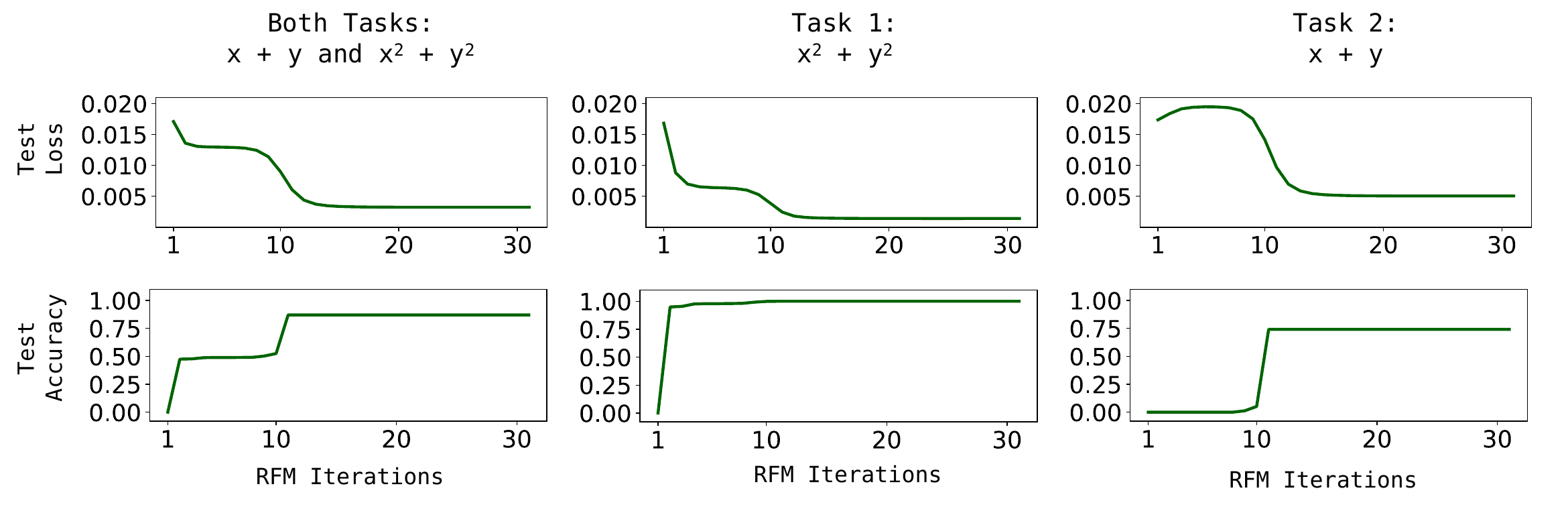}
    \caption{RFM with the Gaussian kernel trained on two modular arithmetic tasks with modulus $p = 61$. Task 1 is to learn $x^2 + y^2 \mod p$ and task 2 is to learn $x + y \mod p$.  
    }
    \label{fig:multitask}
\end{figure}

\begin{figure}[b!]
    \centering
    \includegraphics[width=0.85\textwidth]{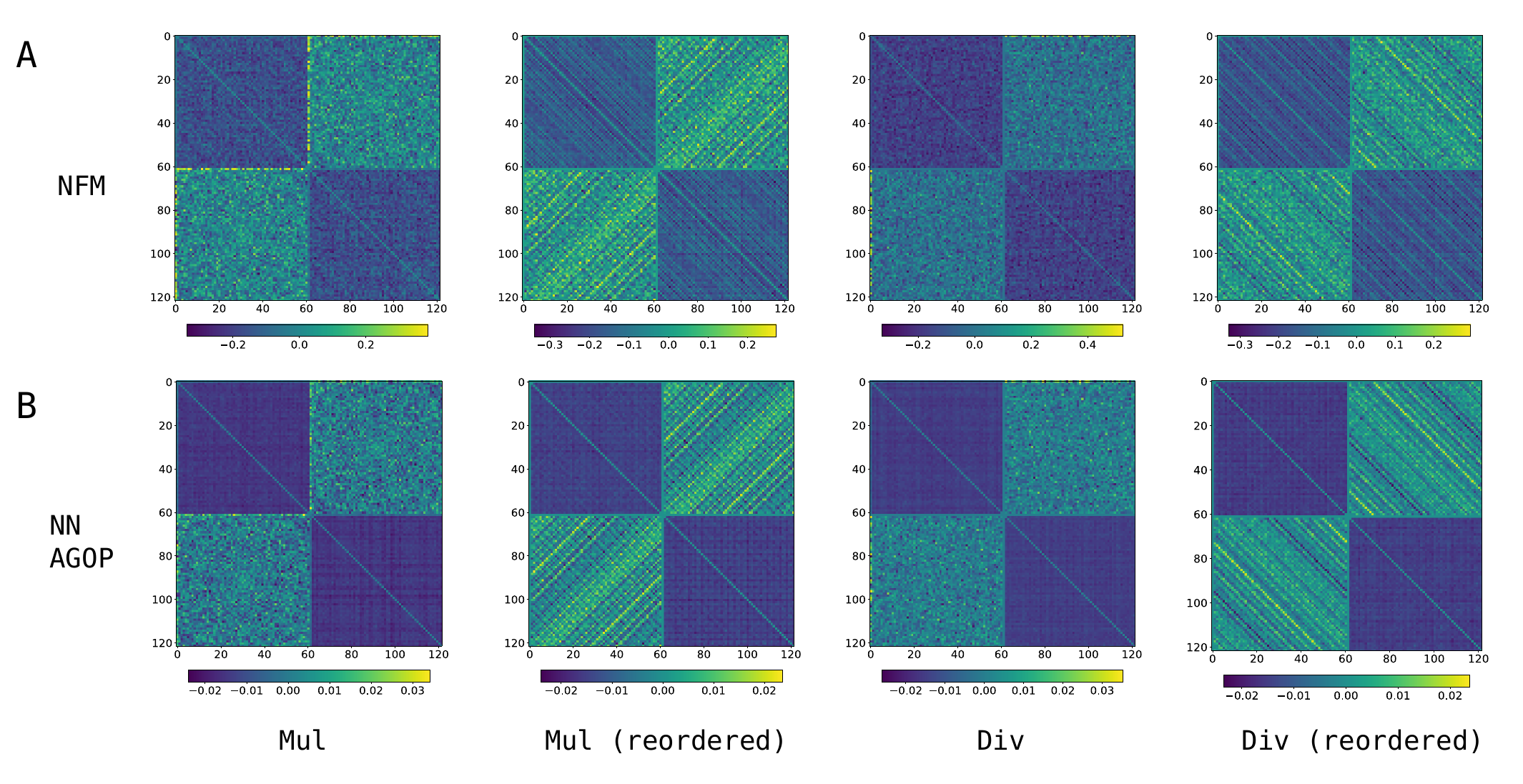}
    \caption{(A) We visualize the neural feature matrix (NFM) from a one hidden layer neural network with quadratic activations trained on modular multiplication and division, before and after reordering by the discrete logarithm.
    (B) We visualize the square root of the AGOP of the neural network in (A) before and after reordering.}
    \label{fig:p61_nn_kernel_mul_div_reorder}
\end{figure}

\newpage

\begin{figure}[h!]
    \centering
    \includegraphics[width=0.8\linewidth]{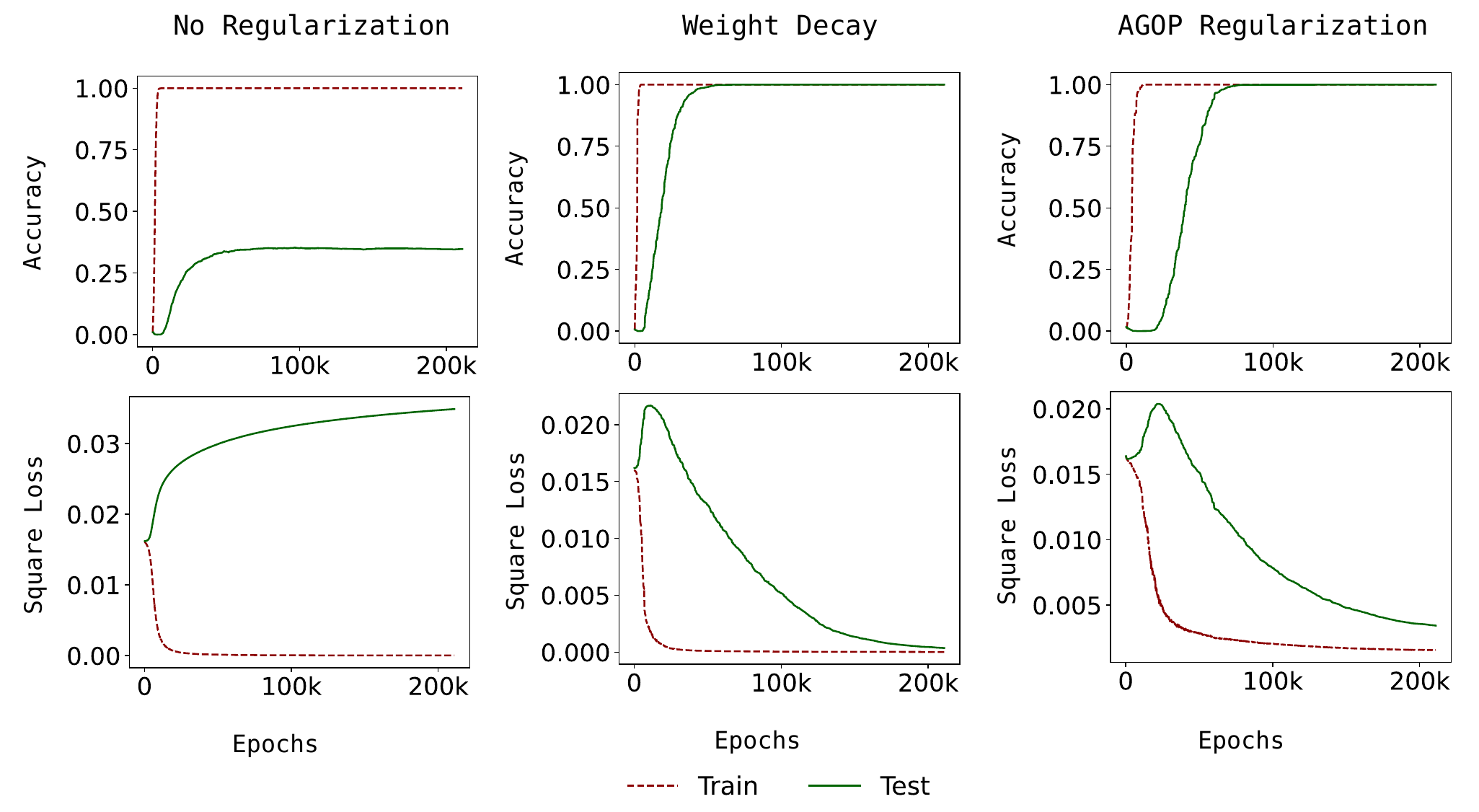}
    \caption{One hidden layer fully connected networks with quadratic activations trained on modular addition with $p = 61$ with vanilla SGD. Without any regularization the test accuracy does not go to $100\%$ whereas using weight decay or regularizing using the trace of the AGOP result in $100\%$ test accuracy and grokking.}
    \label{fig:agop_decay}
\end{figure}

\begin{figure}
  \centering
  \includegraphics[width=0.8\textwidth]{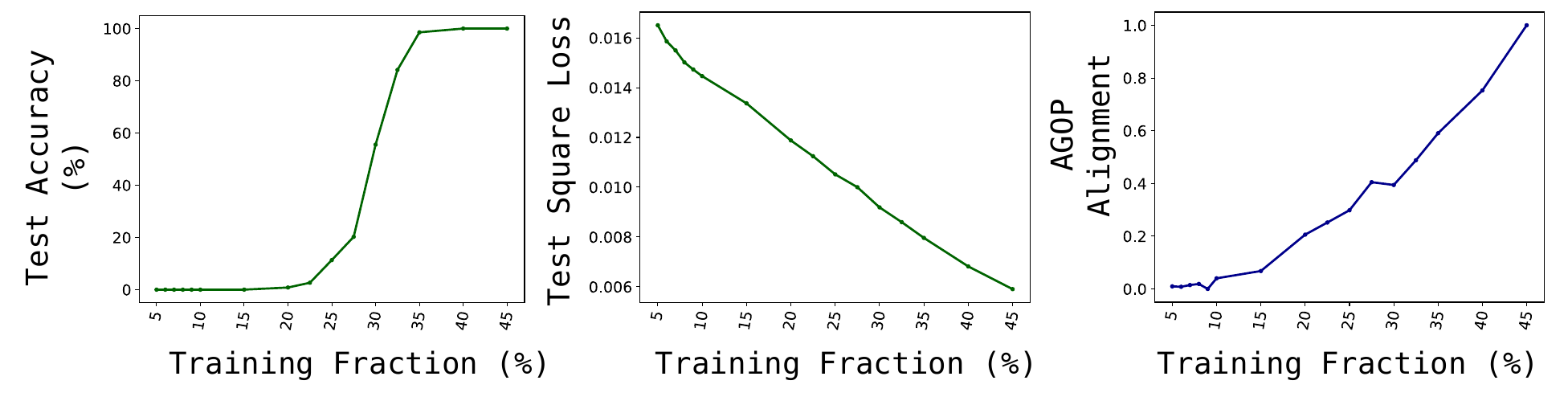}

  \caption{We train kernel-RFMs for 30 iterations using the Mahalanobis Gaussian kernel for $x + y \mod 97$.
  We plot test accuracy, test loss, and AGOP alignment versus percentage of training data used (denoted training fraction).
  All models reach convergence (i.e., both the test loss and test accuracy no longer change) after 30 iterations.
  We observe a sharp transition in test accuracy with respect to the training fraction, but we observe gradual change in test loss and AGOP alignment with respect to the training data fraction.}
  \label{fig:emerge_train_data_frac}
\end{figure}

 \end{document}